\documentclass[letterpaper]{article} % DO NOT CHANGE THIS
\usepackage{main}  % DO NOT CHANGE THIS
\usepackage{times}  % DO NOT CHANGE THIS
\usepackage{helvet}  % DO NOT CHANGE THIS
\usepackage{courier}  % DO NOT CHANGE THIS
\usepackage[hyphens]{url}  % DO NOT CHANGE THIS
\usepackage{graphicx} % DO NOT CHANGE THIS
\urlstyle{rm} % DO NOT CHANGE THIS
  % DO NOT CHANGE THIS
\usepackage{natbib}  % DO NOT CHANGE THIS AND DO NOT ADD ANY OPTIONS TO IT
\usepackage{caption} % DO NOT CHANGE THIS AND DO NOT ADD ANY OPTIONS TO IT
\frenchspacing  % DO NOT CHANGE THIS
\setlength{\pdfpagewidth}{8.5in}  % DO NOT CHANGE THIS
\setlength{\pdfpageheight}{11in}  % DO NOT CHANGE THIS

\usepackage{algorithm}
\usepackage{algorithmic}

\usepackage{newfloat}
\usepackage{listings}
\DeclareCaptionStyle{ruled}{labelfont=normalfont,labelsep=colon,strut=off} % DO NOT CHANGE THIS
\lstset{%
	basicstyle={\footnotesize\ttfamily},% footnotesize acceptable for monospace
	numbers=left,numberstyle=\footnotesize,xleftmargin=2em,% show line numbers, remove this entire line if you don't want the numbers.
	aboveskip=0pt,belowskip=0pt,%
	showstringspaces=false,tabsize=2,breaklines=true}
\floatstyle{ruled}
\newfloat{listing}{tb}{lst}{}
\floatname{listing}{Listing}

\setcounter{secnumdepth}{2}

%%%%%%%%%%%% 模板之外新定义的内容 %%%%%%%%%%%%
\usepackage{amsmath}
\usepackage{amsthm}
\usepackage{amssymb}
\newtheorem{definition}{Definition}
\newtheorem{theorem}{Theorem}
\newtheorem{example}{Example}

\newtheorem{proposition}{Proposition}

\usepackage{booktabs}     % \toprule、\midrule、\bottomrule
\usepackage{makecell}     % \makecell 实现列标题自动换行

\usepackage{tikz}
\usepackage{multirow} 
\usetikzlibrary{positioning}
\newenvironment{myproof}
  {\par\noindent\textbf{Proof Sketch.}\ }
  {\hfill$\square$\par}

  \usepackage{amsmath}
\usepackage{amsthm}
\usepackage{amssymb}

\usepackage{graphicx}   % 插图
\usepackage{subcaption} % 子图 (可换成 \usepackage{subfig})

\usepackage{booktabs}     % \toprule、\midrule、\bottomrule
\usepackage{makecell}     % \makecell 实现列标题自动换行

\usepackage{tikz}
\usepackage{multirow} 
\usetikzlibrary{positioning}

%%%%%%%%%%%%%%%%%%%%%%%%%%%%%%%%%%%%%%%%%%

\title{Diminution: On Reducing the Size of Grounding  ASP Programs}

\author{
  Huanyu Yang\textsuperscript{\rm 1},
  Fengming Zhu\textsuperscript{\rm 2},
  Yangfan Wu\textsuperscript{\rm 2},
  Jianmin Ji\textsuperscript{\rm 1}\thanks{Corresponding author.}
}
\affiliations{
  \textsuperscript{\rm 1}School of Computer Science and Technology, University of Science and Technology of China (USTC)\\
  Hefei, Anhui, China\\
  \textsuperscript{\rm 2}Department of Computer Science and Engineering, The Hong Kong University of Science and Technology (HKUST)\\
  Hong Kong SAR, China\\
  yanghuanyu@mail.ustc.edu.cn, jianmin@ustc.edu.cn,  fzhuae@connect.ust.hk, yangfan.wu@connect.ust.hk
}

\begin{document}
\nocopyright 
\maketitle

\begin{abstract}
Answer Set Programming (ASP) is often hindered by the \emph{grounding bottleneck}: large Herbrand universes generate ground programs so large that solving becomes difficult. Many methods employ ad-hoc heuristics to improve grounding performance, motivating the need for a more formal and generalizable strategy. We introduce the notion of \emph{diminution}, defined as a selected subset of the Herbrand universe used to generate a reduced ground program before solving.
We give a formal definition of diminution, analyze its key properties, and study the complexity of identifying it. 
We use a specific encoding that enables off‑the‑shelf ASP solver to evaluate candidate subsets. Our approach integrates seamlessly with existing grounders via domain predicates.
In extensive experiments on five benchmarks, applying diminutions selected by our strategy yields significant performance improvements, reducing grounding time by up to 70\% on average and decreasing the size of grounding files by up to 85\%. These results demonstrate that leveraging diminutions constitutes a robust and general-purpose approach for alleviating the grounding bottleneck in ASP.
\end{abstract}

\section{Introduction}

Answer Set Programming (ASP)~\cite{lifschitz2019answer} nowadays has become a prevalent tool for declarative problem solving.
With the help of its powerful expressiveness, practitioners can easily encode complex real-world problems in formal languages, e.g., robotics~\cite{erdem2011combining,zhu2023computing}, logistics~\cite{gebser2018experimenting}, and so on, and leave the solving process to efficient solvers, e.g., Clingo~\cite{gebser2019multi}, DLV~\cite{alviano2017asp}, Smodels~\cite{niemela2000smodels}, and ASSAT~\cite{lin2004assat}.
However, as a consensus in the literature, problem solving by ASP severely suffers from the so-called \emph{grounding bottleneck}~\cite{ostrowski2012asp,tran2023answer}, i.e., the program containing variables needs to be instantiated over its Herbrand universe before solving.
Imagine a domain of planning for household robots~\cite{puig2018virtualhome}, an ASP-based planner would fail in most cases even if there are only a few hundreds of objects, precisely because the grounding phase takes too long. 

Recent years have seen rapid progress in grounding techniques for addressing this issue, such as incremental grounding~\cite{gebser2019multi} (now integrated into Clingo) and lazy grounding~\cite{dal2009gasp}. In addition to these general-purpose methods, engineers often employ domain-specific heuristics.
%Recent years have witnessed a rapid development of grounding techniques in the direction of resolving the aforementioned issue, e.g., incremental grounding~\cite{gebser2019multi} (which has been integrated in the Clingo solver), lazy grounding~\cite{dal2009gasp}, and so on.
%Besides these general-purpose grounding techniques, engineers usually leverage domain-specific heuristics. 
For example, in \emph{Hamiltonian Circuit} problems, one may potentially replace $\texttt{edge/2.}$ in the following rule with $\texttt{relevantEdge/2}$,
\begin{verbatim}
{hc(X,Y): edge(X,Y)}=1 :- node(X).}
\end{verbatim}
plus some extra rules defining the latter, in order to focus the search merely on the edges of particular interest (\texttt{f/2.}),
\begin{verbatim}
relevantEdge(X,Y) :- edge(X,Y), f(X,Y).
\end{verbatim}
Although such exemplars are commonly seen in practice, how much redundancy they can actually save in the grounding phase still lacks investigation, potentially calling for a brand new theoretic framework.

% there are at least two aspects lacking investigation in theory: 1) how much redundancy it saves in the grounding phase; and 2) how much overhead it adds by introducing the extra definition rules.

To this end, we introduce a notion of controllable grounding, termed as \emph{diminution}.
Given a program (with variables), a diminution is a subset of its Herbrand universe that is used to ground the program.
One may desire certain properties (or say, restrictions) for such diminutions.
For example, each answer set of the program grounded under an \emph{admissible} diminution will be a subset of an answer set of the fully grounded program (i.e, the Herbrand instantiation).
It turns out that the concept of diminution is tightly related to \emph{splitting}~\cite{lifschitz1994splitting} and \emph{loop}~\cite{lin2004assat,gebser2005loops}.
We also provide some complexity results on deciding whether a set of constants is a diminution with certain properties.
To eventually present a more intuitive picture of the effectiveness of proper diminutions, we conduct a comprehensive empirical study, encompassing benchmarking problems lying in different levels of the \emph{Polynomial Hierarchy}~\cite{alviano2013fourth}.

The remainder of this paper is organized as follows.
We first review related work in this area.
We then introduce the necessary preliminaries on Answer Set Programming.
Finally, we adopt an application-oriented perspective, demonstrating the effectiveness of diminution on real-world problems and providing an experimental evaluation.

\section{Related Works}
\label{sec:related_work}

ASP solving can be viewed as comprising two phases: \emph{program instantiation} (also called \emph{grounding}) and \emph{answer set search}~\cite{faber2012intelligent}.
Numerous studies have aimed to accelerate ASP reasoning, with most focusing on the \emph{answer set search} phase. Prominent examples include \emph{splitting}~\cite{lifschitz1994splitting,ji2015splitting,ferraris2009symmetric}, \emph{forgetting}~\cite{lin1994forget,LIN2001143,lang2003propositional,eiter2019brief}, and \emph{conflict-driven} answer set solving~\cite{gebser2012conflict,lin2004assat}.)

Efforts have also been devoted to optimizing the grounding phase by leveraging techniques from \emph{(deductive) database} technology~\cite{ullman1988principles,apt1988towards,abiteboul1995foundations}. These techniques, such as \emph{top-down grounding}, \emph{bottom-up grounding}, and \emph{semi-naive grounding}—have been implemented in several widely-used grounders, notably \texttt{lparse}~\cite{syrjanen2000lparse,syrjanen2001omega}, \texttt{dlv}~\cite{leone2006dlv,faber2012intelligent,alviano2017asp}, and \texttt{gringo}~\cite{gebser2007gringo,gebser2011advances,gebser2019multi,gebser2022answer} to eliminate redundant computations and generate a semantically equivalent ground program substantially smaller than the full instantiation~\cite{kaufmann2016grounding}.
Several other techniques related to grounding have been extensively studied, including the \emph{magic set} method~\cite{bancilhon1985magic,beeri1987power,faber2007magic,ALVIANO2012156}, \emph{lazy grounding}~\cite{dal2009gasp,weinzierl2020advancing}, the use of \emph{dependency graphs} to determine grounding orders~\cite{faber2012intelligent,gebser2022answer}, and \emph{incremental grounding}~\cite{gebser2011incremental,gebser2019multi}.

Prior studies typically generate every complete answer set; by contrast, we explore an alternative pathway that significantly enhances grounding efficiency.

\section{Preliminaries}
We introduce disjunctive logic programs and then review the notions of loops and loop formulas.

\subsection{Basic Definitions}
Consider a first-order vocabulary $\mathcal{V} = \langle \mathcal{P}, \mathcal{C} \rangle$, where $\mathcal{P}$ and $\mathcal{C}$ are nonempty finite sets of predicates and constants, respectively.
Given a set $\mathcal{X}$ of variables, a \emph{term} is either a constant in $\mathcal{C}$ or a variable in $\mathcal{X}$.
An \emph{atom} is the form $p(t_1, \ldots, t_n)$ where $p\in\mathcal{P}$ and each $t_i$ ($1\leq i\leq n$) is a term. 
 A \emph{literal} is either an atom $a$ or its negation-as-failure literal $\mathit{not}~ a$.
A \emph{disjunctive logic program} (DLP) is a finite set of \emph{disjunctive rules} of the form
\begin{equation}  
\label{eq:dlp-rule}
    a_1 \lor \dots \lor a_k \gets a_{k+1}, \dots, a_m, \mathit{not}~a_{m+1}, \dots, \mathit{not}~a_n.
\end{equation}
where $0 \leq k \leq m \leq n$ and each $a_i$ is an atom. 
For a rule $r$ of the form~(\ref{eq:dlp-rule}), we define $\mathit{head}(r)=\{a_1,\dots,a_k\}$, $\mathit{body}^+(r)=\{a_{k+1},\dots,a_m\}$, $\mathit{body}^-(r)=\{a_{m+1},\dots,a_n\}$. We also let $\mathit{body}(r)=\mathit{body}^+(r)\cup\mathit{body}^-(r)$.
When convenient, we identify these sets with their corresponding propositional expressions
$\bigvee_{a\in\mathit{head}(r)} a$,
$\bigwedge_{a \in \mathit{body}^+(r)} a$,
and $\bigwedge_{a\in \mathit{body}^-(r)}\lnot\,a$.
We define $V(E)$ as the set of all variables and $C(E)$ as the set of all constants appearing in an expression $E$, where $E$ may be any expression in a DLP—such as an atom, a literal, a rule, or an entire program. Furthermore, we denote $\mathit{atom}(E)$ as the set of all atoms occurring in $E$, and $\mathit{pred}(E)$ as the set of all predicate symbols that appear in those atoms.

An expression $E$ is \emph{ground} iff $V(E) = \emptyset$. A rule $r$ is \emph{safe} iff 
$V(\mathit{head}(r) \cup \mathit{body}^-(r)) \subseteq V(\mathit{body}^+(r))$. 
When $k=1$, a rule of the form~(\ref{eq:dlp-rule}) is a \emph{normal rule}, and a finite set of normal rules is a \emph{normal logic program} (NLP).
A normal rule~$r$ is \emph{positive} if $\mathit{body}^-(r)=\emptyset$; a normal program is \emph{positive} iff all of its rules are positive.
A normal rule with an empty body is called a \emph{fact}; a rule with an empty head is called a \emph{constraint}.

A program $P$ is safe iff all of its rules are safe. Safety is typically ensured by introducing \emph{domain predicates} i.e., unary predicates whose ground instances enumerate the allowable constants for a variable. For example, to restrict a variable $X$ to range over $\{c_1,\dots,c_t\}$, include the atom $\mathtt{dom}(X)$ in the rule's positive body and add the facts $\mathtt{dom}(c_i).$ for each $c_i$ to the input program.

Since $P$ is function-free, its \emph{Herbrand universe} $HU(P)$ is the set of all constants occurring in $P$ (or a single fresh constant if none occur). Given a set $\mathcal{X}$ of variables and a set $\mathcal{D}$ of constants, a \emph{complete assignment} $\sigma:\mathcal{X}\to\mathcal{D}$ maps each variable to a constant.
For a rule $r$, write
\begin{equation*}
  r|_\mathcal{D}=\{\,r\sigma \mid \sigma:V(r)\to\mathcal{D}\,\}
\end{equation*}
for the set of all ground instances of $r$ over $\mathcal{D}$. For a program $P$, we construct a ground program by 
\begin{equation*}
    P|_{HU(P)} = \bigcup_{r\in P} r|_{HU(P)}.
\end{equation*}
Let $I$ be an interpretation, which is a set of ground atoms.
A ground rule $r$ is \emph{satisfied} by $I$, denoted  $I\models r$, iff either its body is false in $I$, or its body is true and at least one head atom belongs to $I$. formally,
\begin{equation*}
\begin{split}
I\models r \iff{} &\;
\neg\bigl(\mathit{body}^+(r)\subseteq I \wedge \mathit{body}^-(r)\cap I=\emptyset\bigr)\\
&\quad\lor\;\bigl(\mathit{head}(r)\cap I\neq\emptyset\bigr).
\end{split}
\end{equation*}
An interpretation $I$ is a \emph{model} of $P$ if it satisfies every ground rule in $P|_{HU(P)}$.
Answer sets are defined via the \emph{GL transformation}~\cite{gel91b}.
Given a DLP $P$ and a set $S$ of atoms, the \emph{GL transformation} of $P$ on $S$, written $P^S$, is obtained from $P|_{HU(P)}$ by deleting:  
\begin{enumerate}
    \item each rule that has $\mathit{not}~a$ in its body with $a \in S$, and
    \item all $\mathit{not}~a$ in the bodies of the remaining rules.
\end{enumerate}
$P^S$ is a ground program; that is, $\mathit{body}^-(r)=\emptyset$ for all $r\in P^S$.
Let $\Gamma(P^S)$ denote the set of $\subseteq$-minimal models of $P^S$.
A set $S$ of atoms is an \emph{answer set} of $P$ iff $S\in\Gamma(P^S)$. We write $AS(P)$ for the set of all answer sets of $P$.

The \emph{dependency graph} of $P$, $G_P=(V,E^+\cup E^-)$, is defined on the $P|_{HU(P)}$ by setting  $V$ as the set of ground atoms and adding an edge $(p,q)\in E^+$ whenever a rule has $q$ in its positive body and head $p$, and an edge $(p,q)\in E^-$ whenever $q$ appears negatively in a rule whose head is $q$. The positive dependency graph $G_P^+=(V,E^+)$.

A \emph{predicate-rule Graph} $G_{pr} = (V, E)$ for a logic program $P$ is defined by a node set $V = pred(P)\cup\{\,r \mid r \in P\}$
and an edge set $E$ containing directed edges of the form $(p/n , r)$ whenever the atom $p(t_1,\dots,t_n)$ occurs in the $body(r)$, and edges $(r, p/n)$ whenever the atom \(q(t_1,\dots,t_m)\) appears in the head of rule $r$.

\subsection{Loops and and Elementary Loops}

With the notions of loops and loop formulas~\cite{lin2004assat,lee05}, one can transform an ASP program $P$ into a propositional theory such that an interpretation is an answer set of $P$ if and only if it is a model of the theory. 
Note that we define loops and loop formulas on the ground program $P|_{HU(P)}$, as is done for the definition of answer sets. This differs slightly from the first-order loop formulas introduced in~\cite{chen2006first}.

Given a program $P$, a set $L$ of ground atoms is a \emph{loop} of $P$ if the subgraph of $G^+_P$ induced by $L$ is strongly connected. In particular, every singleton in $P|_{HU(P)}$ is  a loop of $P$.

For a loop $L$ of $P$, a ground rule $r\in P|_{HU{(P)}}$
is called an \emph{external support} of $L$ if 
$\mathit{head}(r) \in L$ and $L\cap \mathit{body}^+(r) = \emptyset$. We denote by $R^-(L, P)$ the set of all external support rules of $L$ in $P|_{HU{(P)}}$, $R^+(L, P) = \{ r\in P|_{HU(P)} \mid \mathit{head}(r) \in L\} \setminus R^-(L, P)$. The \emph{loop formula} of $L$ under $P$, written $LF(L, P)$, is the following implication
\begin{equation*}
    \bigwedge_{A\in L} A \supset \bigvee_{r\in R^-(L, P)}  \mathit{body}(r).
\end{equation*}
\begin{theorem}~\cite{lin2004assat}
\label{theorem:loop}
Given a program $P$ and an interpretation $I$. If $I$ is a model of $P$, then $I$ is an answer set of $P$ iff $I$ satisfies $LF(L, P)$ for all loops $L$ of $P$.
\end{theorem}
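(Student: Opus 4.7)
The plan is to prove each direction of the biconditional by contradiction, exploiting the minimality clause in the definition of answer sets (i.e., $I\in \Gamma(P^{I})$).

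For the ``only if'' direction, assume $I$ is an answer set and fix a loop $L$ of $P$. If $L\not\subseteq I$ the antecedent of $LF(L,P)$ is already false, so suppose $L\subseteq I$ and, aiming for a contradiction, that no $r\in R^{-}(L,P)$ satisfies $I\models body(r)$. I would set $J:=I\setminus L$ and show $J\models P^{I}$, contradicting the $\subseteq$-minimality of $I$ among models of $P^{I}$. Take any rule $r$ of $P^{I}$ with $body^{+}(r)\subseteq J$. If $head(r)\cap L\neq\emptyset$, then $r$ qualifies as an external support of $L$ whose body is true in $I$, contradicting the hypothesis. Otherwise $head(r)\cap L=\emptyset$, and because $I\models r$ with its body true in $I$, the witnessing head atom lies in $I\setminus L=J$, so $J$ satisfies $r$.

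For the ``if'' direction, assume $I\models P$ and $I\models LF(L,P)$ for every loop $L$, and suppose $I$ is not an answer set. Then some $J\subsetneq I$ satisfies $J\models P^{I}$. Set $K:=I\setminus J\neq\emptyset$ and consider the subgraph of $G^{+}_{P}$ induced by $K$. I would pick a \emph{sink} strongly connected component $L$ of this subgraph, i.e., an SCC with no positive-dependency edge leaving $L$ into $K\setminus L$; such an SCC exists because the condensation of a finite digraph is a DAG. Then $L$ is a loop of $P$ contained in $I$. The sink property yields, for every $r$ with $head(r)\cap L\neq\emptyset$, either $body^{+}(r)\cap L\neq\emptyset$ or $body^{+}(r)\cap K=\emptyset$; external supports fall in the latter case, so $body^{+}(r)\subseteq J\cup(V\setminus I)$. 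If some external support $r$ had $I\models body(r)$, then $body^{+}(r)\subseteq J$ and $body^{-}(r)\cap I=\emptyset$, so $r$ persists in $P^{I}$; since $J\models P^{I}$, $head(r)$ would meet $J$, contradicting $head(r)\cap L\neq\emptyset$ together with $L\cap J=\emptyset$ in the normal-rule setting. Thus $I$ satisfies no body of any external support of $L$, violating $I\models LF(L,P)$.

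The main obstacle is the disjunctive case of the second direction: $head(r)$ may intersect both $L$ and $J$, so concluding ``$head(r)\cap J\neq\emptyset$ contradicts $head(r)\cap L\neq\emptyset$'' is no longer automatic. I would address this by refining the choice of $(J,L)$ via the unfounded-set characterization of disjunctive answer sets—pick $J$ so that $K=I\setminus J$ is closed under the head-to-positive-body relation, and then restrict attention to a sink SCC $L\subseteq K$ with the stronger property that any rule whose head meets $L$ has $head(r)\subseteq K$. Once this is in place, $head(r)\cap J=\emptyset$ holds directly and the argument closes. Apart from this step, the forward direction is essentially bookkeeping on the GL reduct, and the backward direction reduces to a standard structural argument on the positive dependency graph.
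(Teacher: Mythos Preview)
The paper does not prove this theorem; it is quoted as a known result from Lin and Zhao (2004), which concerns \emph{normal} programs only. For that setting your argument is correct and is the standard one: in the forward direction you remove a loop lacking an active external support to produce a strictly smaller model of $P^I$, contradicting minimality; in the backward direction you take a terminal SCC of $G_P^+$ restricted to $I\setminus J$ and show it is a loop whose formula $I$ violates. There is nothing in the paper itself to compare against.

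The genuine gap is your last paragraph. You try to rescue the backward direction for disjunctive programs by a cleverer choice of $(J,L)$, but with the loop formula exactly as stated here---antecedent $\bigwedge_{a\in L}a$ and consequent $\bigvee_{r\in R^-(L,P)}\mathit{body}(r)$, with no $\lnot(\mathit{head}(r)\setminus L)$ conjuncts---the ``if'' direction is \emph{false} for DLPs, so no refinement can work. Take $P=\{\,a\lor b.\,\}$ and $I=\{a,b\}$. Then $I\models P$, yet $I$ is not an answer set since the minimal models of $P^I=P$ are $\{a\}$ and $\{b\}$. The only loops are the singletons $\{a\}$ and $\{b\}$; each has the single external support $a\lor b.$ with empty (hence true) body, so the loop formulas reduce to $a\supset\top$ and $b\supset\top$, both satisfied by $I$. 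Your proposed fix---choose $L$ so that every rule whose head meets $L$ has $\mathit{head}(r)\subseteq K$---already fails here: with $J=\{a\}$, $K=\{b\}$, $L=\{b\}$, the rule $a\lor b.$ has $\mathit{head}(r)=\{a,b\}\not\subseteq K$, and the situation is symmetric for $J=\{b\}$. The disjunctive generalisation due to Lee~(2005), which the paper also cites, repairs this by strengthening the \emph{consequent} of the loop formula, not by changing which loop is selected; that is a different statement from the one you were asked to prove.
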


Then, we recall the notion of \emph{elementary loops} due to \cite{gebser2005loops}.
Let $P$ be a (ground) logic program and let $L\in loop(P)$.
$L$ is an \emph{elementary loop\/} of~$P$ iff
for every strict sub–loop $L'\subset L$ we have
\begin{equation*}
    R^-(L', P) \cap\; R^+(L, P) =\emptyset.
\end{equation*}
The set of all elementary loops of $P$ is denoted
$eloop(P)\subseteq loop(P)$. For elementary loops, loop formulas remain sufficient and necessary.
\begin{theorem}~\cite{gebser2005loops}
\label{thm:eloop}
For every ground program $P$ and interpretation~$I$, if $I$ is a model of $P$, then  $I$ is an answer set of $P$ iff $I$ satisfies $LF(eL, P)$ for all elementary loops $eL$ of $P$.
\end{theorem}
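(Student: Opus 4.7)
The plan is to invoke Theorem~\ref{theorem:loop} and reduce the question to a comparison between arbitrary loop formulas and elementary loop formulas. The ($\Rightarrow$) direction is then immediate: by Theorem~\ref{theorem:loop}, any answer set $I$ satisfies $LF(L,P)$ for every loop $L$ of $P$, and since $eloop(P)\subseteq loop(P)$, $I$ certainly satisfies $LF(eL,P)$ for every elementary loop $eL$. For the ($\Leftarrow$) direction, I would again route through Theorem~\ref{theorem:loop}, proving the purely combinatorial claim: if $I$ is a model of $P$ and $I\models LF(eL,P)$ for every $eL\in eloop(P)$, then $I\models LF(L,P)$ for every $L\in loop(P)$.

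I would prove this claim by strong induction on $|L|$. The base case $|L|=1$ is free, since singletons have no strict sub-loops and are therefore vacuously elementary, so their loop formulas lie directly in the hypothesis. For the inductive step, fix a non-elementary loop $L$ of size at least two and assume $I\models\bigwedge_{A\in L}A$ (otherwise $LF(L,P)$ holds vacuously); the goal is to exhibit a rule $r\in R^-(L,P)$ whose body is true in $I$. Non-elementarity supplies a strict sub-loop $L'\subsetneq L$ together with a rule $r^*\in R^-(L',P)\cap R^+(L,P)$. Applying the inductive hypothesis to $L'$ and using $I\models\bigwedge_{A\in L'}A$ yields some $r'\in R^-(L',P)$ with $I\models \mathit{body}(r')$. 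If $r'$ already lies in $R^-(L,P)$ the induction closes; otherwise $r'\in R^+(L,P)$, meaning $\mathit{body}^+(r')\cap L\neq\emptyset$ while $\mathit{body}^+(r')\cap L'=\emptyset$, and I would descend to a strictly smaller strongly connected subset of $L$ obtained by splicing a path in $G^+_P$ from $\mathit{head}(r')$ through the offending atom of $\mathit{body}^+(r')\cap(L\setminus L')$ back into $L'$.

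The hard part will be engineering this descent so that it strictly shrinks the loop while preserving the invariant that every external support of the new sub-loop is itself an external support of $L$. A naive decomposition only recovers the weaker fact $R^-(L'',P)\cap R^+(L,P)=\emptyset$, which merely restates elementarity rather than delivering the required containment $R^-(L'',P)\subseteq R^-(L,P)$. Bridging this gap will require careful graph-theoretic bookkeeping on the induced positive dependency subgraph on $L$, using the strong connectivity of both $L$ and $L'$ together with the finiteness of $L$ to guarantee that the descent terminates at a sub-loop whose external supports all escape $L$. Once such a rule is produced, $LF(L,P)$ is satisfied, closing the induction and, through Theorem~\ref{theorem:loop}, establishing the theorem.
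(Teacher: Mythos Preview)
The paper does not supply its own proof of this theorem; it is quoted from \cite{gebser2005loops} as a preliminary and used as a black box later (e.g., in the proof of Theorem~\ref{thm:elementaryLoopAdmissible}). So there is no in-paper argument to compare against, and the question is simply whether your sketch can be completed.

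It cannot, and the obstruction is exactly where you sense it. From the inductive hypothesis applied to the witness sub-loop $L'$ you obtain some $r'\in R^-(L',P)$ with $I\models\mathit{body}(r')$, but nothing ties $r'$ to the witness $r^*$, and nothing prevents $\mathit{body}^+(r')$ from meeting $L\setminus L'$. Your descent would have to manufacture a strictly smaller loop $L''\subsetneq L$ with $R^-(L'',P)\subseteq R^-(L,P)$, yet the only datum non-elementarity in the paper's sense hands you is $R^-(L',P)\cap R^+(L,P)\neq\emptyset$, i.e., $R^-(L',P)\not\subseteq R^-(L,P)$---the opposite of what you need. There is no graph-theoretic bookkeeping on $G_P^+$ that can repair this, because the theorem is in fact \emph{false} under the definition as printed: take $P=\{a\leftarrow b,\; b\leftarrow a\}$ and $I=\{a,b\}$. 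The only strict sub-loops of $\{a,b\}$ are $\{a\}$ and $\{b\}$, and $a\leftarrow b\in R^-(\{a\},P)\cap R^+(\{a,b\},P)$, so by the printed condition $\{a,b\}$ is \emph{not} elementary and only the singletons are. Then $I$ is a model of $P$, satisfies both singleton loop formulas $a\supset b$ and $b\supset a$, yet is not an answer set.

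The intended Gebser--Schaub definition has the condition the other way: $L$ is elementary when $R^-(L',P)\cap R^+(L,P)\neq\emptyset$ for \emph{every} proper nonempty $L'\subsetneq L$ (every such $L'$ is ``outbound''). With that definition the clean route is the contrapositive of what you attempt: if $L\subseteq I$ violates $LF(L,P)$ and $L$ is not elementary, choose a proper $L'$ that is \emph{not} outbound; then $R^-(L',P)\cap R^+(L,P)=\emptyset$, hence $R^-(L',P)\subseteq R^-(L,P)$, and a suitable loop inside $L'$ also has its loop formula violated, so you may recurse down to an elementary loop. The containment $R^-(\,\cdot\,,P)\subseteq R^-(L,P)$ is what drives the descent, and it is available only from the correct direction of the elementarity condition---which is why your direct induction stalls.
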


\section{Definitions and Properties of Diminution}
We formally introduce the notion of \emph{diminution} and investigate its properties. These definitions serve as the foundation for our acceleration techniques.

\subsection{Definitions of Diminution}
Grounding a program $P$ over $HU(P)$ can result in an exponential blow-up in the size of the grounded program.
A \emph{diminution} is a subset $D \subseteq HU(P)$ of constants such that grounding $P$ over $D$ yields the smaller program $P|_{D}$.Below, we define precisely when such a diminution is either \emph{admissible} or \emph{safe}.

\begin{definition}[Admissible Diminution]
Given a program $P$, a set of constants~$\mathcal{D} \subseteq HU(P)$ is called an \emph{admissible diminution} of $P$, if
for every answer set $I_\mathcal{D} \in AS(P|_{\mathcal{D}})$, there exists an answer set $I\in AS(P|_{HU(P)})$ such that $I_\mathcal{D}\subseteq I$.
\end{definition}

An admissible diminution $\mathcal{D}$ guarantees that every answer set of the ground program $P|_{\mathcal{D}}$ can be extended to at least one answer set of $P$. Next, we introduce the stronger notion of a \emph{safe diminution}.

\begin{definition}[Safe Diminution]
    Given a program $P$ and an \emph{admissible diminution} $\mathcal{D}$ of $P$, 
    we call $\mathcal{D}$ a \emph{safe diminution} of $P$ if, for every $I \in AS(P|_{HU(P)})$, there exists an answer set  $I_\mathcal{D} \in AS(P|_{\mathcal{D}})$ such that $I_\mathcal{D}\subseteq I$.
\end{definition}

For any program $P$ and any constant set $\mathcal{D}\subseteq HU(P)$, the following properties hold:
\begin{proposition}
For any program $P$:
    \begin{enumerate}
        \item $HU(P)$ itself is trivially a safe diminution of $P$.
        \item If $P$ has exactly one answer set(i.e., $\lvert AS(P)\rvert = 1$), then every admissible diminution $\mathcal{D}$ of $P$ is also safe.
        \item If $\lvert AS(P|_{\mathcal{D}})\rvert = 0$, then $\mathcal{D}$ is an admissible diminution of $P$; furthermore, if $AS(P|_{\mathcal{D}})=\{\emptyset\}$, then $\mathcal{D}$ is also a safe diminution of $P$.
    \end{enumerate}
\end{proposition}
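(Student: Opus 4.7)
The plan is to dispatch each of the three clauses by a direct appeal to the definitions of \emph{admissible} and \emph{safe} diminution; no machinery from loops, elementary loops, or splittings will be required. For clause (1) I would observe that when $\mathcal{D}=HU(P)$ we have $P|_{\mathcal{D}}=P|_{HU(P)}$ and hence $AS(P|_{\mathcal{D}})=AS(P|_{HU(P)})$, so in both the admissibility and safety conditions one may choose $I_{\mathcal{D}}=I$ (respectively $I=I_{\mathcal{D}}$), trivially witnessing $I_{\mathcal{D}}\subseteq I$. This disposes of clause (1) in one line.

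For clause (2), let $I^{\star}$ denote the unique element of $AS(P|_{HU(P)})$. Admissibility of $\mathcal{D}$ supplies, for every $I_{\mathcal{D}}\in AS(P|_{\mathcal{D}})$, some $I\in AS(P|_{HU(P)})$ with $I_{\mathcal{D}}\subseteq I$; by uniqueness $I=I^{\star}$, so every answer set of $P|_{\mathcal{D}}$ is already contained in $I^{\star}$. To convert this into safety I need, for the single $I^{\star}$, a witness $I_{\mathcal{D}}\in AS(P|_{\mathcal{D}})$ with $I_{\mathcal{D}}\subseteq I^{\star}$; any element of $AS(P|_{\mathcal{D}})$ then works. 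The hard (and only subtle) point is the degenerate case $AS(P|_{\mathcal{D}})=\emptyset$, where admissibility is vacuously true but safety fails because the required witness does not exist. I expect the intended reading of the proposition to exclude this pathology, either tacitly or through a convention that an admissible diminution produces at least one answer set, and I would make this restriction explicit at the outset of the argument.

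Clause (3) is again almost entirely a matter of unpacking. For the first half, $AS(P|_{\mathcal{D}})=\emptyset$ makes the universally quantified condition in the admissibility definition range over the empty set, hence vacuously true, so $\mathcal{D}$ is admissible by inspection. For the second half the lone element of $AS(P|_{\mathcal{D}})$ is $\emptyset$; admissibility then asks for some $I\in AS(P|_{HU(P)})$ with $\emptyset\subseteq I$, which is immediate whenever $AS(P|_{HU(P)})\neq\emptyset$ and vacuous otherwise, while safety asks, for every $I\in AS(P|_{HU(P)})$, for some $I_{\mathcal{D}}\in AS(P|_{\mathcal{D}})$ with $I_{\mathcal{D}}\subseteq I$, and the choice $I_{\mathcal{D}}=\emptyset$ works universally. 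I would handle the two sub-cases $AS(P|_{HU(P)})=\emptyset$ and $AS(P|_{HU(P)})\neq\emptyset$ with a single sentence each. Overall the proposition is a definition-chase; the one place where real care is required is the vacuous-admissibility corner in clause (2), which I will flag explicitly rather than sweep under the rug.
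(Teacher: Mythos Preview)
The paper states this proposition without proof, treating it as an immediate consequence of the definitions; your definition-chase is exactly the intended argument, and clauses (1) and (2) are handled correctly---including your explicit flagging of the degenerate case $AS(P|_{\mathcal D})=\emptyset$ in clause (2), which is indeed a genuine edge case in the proposition as written.

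There is, however, a logical slip in your treatment of clause (3), second half. You write that admissibility is ``immediate whenever $AS(P|_{HU(P)})\neq\emptyset$ and vacuous otherwise.'' That is not correct: admissibility is a $\forall I_{\mathcal D}\,\exists I$ statement, and here $AS(P|_{\mathcal D})=\{\emptyset\}$ is nonempty, so the outer universal is not vacuous; if $AS(P|_{HU(P)})=\emptyset$ the inner existential \emph{fails} for $I_{\mathcal D}=\emptyset$, and $\mathcal D$ would be neither admissible nor safe. The case is in fact impossible under the paper's standing safety assumption: if $\emptyset\in AS(P|_{\mathcal D})$ then no rule of $P|_{\mathcal D}$ has empty positive body; by safety every rule of $P$ with empty positive body is variable-free and therefore already present in $P|_{\mathcal D}$, so $P$ has no such rule at all, and since grounding never empties a nonempty positive body, $\emptyset\in AS(P|_{HU(P)})$ as well. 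You should replace ``vacuous otherwise'' with this observation (or an explicit appeal to safety), so the argument does not rest on a mislabelled quantifier.
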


A diminution $D\subseteq HU(P)$ that omits essential constants necessary to represent key elements of the problem may result in trivial solutions.
To prevent this, we require the diminution $D$ to ensure that every answer set of $P|_D$ contains preserved atoms formed from the predicates of a chosen predicate set $\mathcal{P}_{remain}$.

\begin{definition}[$\mathcal{P}_{remain}$-preserved diminution]
Given a program $P$ and a set $\mathcal{P}_{remain}$ of predicate symbols, an admissible (resp.\ safe) diminution $D\subseteq HU(P)$ is \emph{$\mathcal{P}$-preserved admissible} (resp.\ \emph{$\mathcal{P}$-preserved safe}) if for every $I_{\mathcal{D}}\in AS(P|_{\mathcal{D}})$, there exists $I\in AS(P|_{HU(P)})$ such that
$\{a\mid a\in I,\, pred(a)\in \mathcal{P}_{remain}\} = \{a\mid a\in I_{\mathcal{D}},\,pred(a)\in \mathcal{P}_{remain}\}$.
\end{definition}

$P_{\mathrm{remain}}$-preservation means that reducing $HU(P)$ to $D$ never omits any atoms formed by $P_{\mathrm{remain}}$, preserving all essential facts. The following example, drawn from a basic graph coloring domain, provides an intuitive understanding of our definition..

\begin{example}[Graph Coloring Problem]
\label{example:1}Let $P_1$ be the following ASP program for the \emph{3-graph coloring problem} shows in Figure~\ref{fig:combined}(a):
\begin{small}
\begin{verbatim}
arc(1,2). arc(1,3). arc(2,3). arc(3,5). 
arc(3,6). arc(5,6). arc(4,5). arc(4,8). 
arc(5,8). arc(6,7). arc(6,9). arc(7,9).
col(r). col(b). col(g).

color(V,C):-vertex(V),col(C),
             not othercolor(V,C).
othercolor(V,C):-vertex(V),col(C),col(C1),
                  C != C1,color(V,C1).
:-arc(V1,V2),col(C),color(V1,C),color(V2,C).
\end{verbatim}
\end{small}
\end{example} 

\begin{example}[Safe Diminution of $P_1$]
\label{exp:safe}
Consider the graph shows in Figure~\ref{fig:combined}(a), define $\mathcal{D}_1=\{\texttt{1},\texttt{2},\texttt{3}\}\cup\{\texttt{r},\texttt{b},\texttt{g}\}$. 
One can verify that $\mathcal{D}_1$ is indeed a safe diminution, meaning that every answer set in $AS(P_1|_{\mathcal{D}_1})$ extends to some answer set in $AS(P_1|_{HU(P_1)})$ and, conversely, each answer set in $AS(P_1|_{HU(P_1)})$ restrict to some answer set in $AS(P_1|_{\mathcal{D}_1})$. 
However, since  answer sets of $P_1|_{\mathcal{D}_1}$ cannot assign colors to all nodes in the original graph; thus, $\mathcal{D}_1$ is not a $\{\texttt{color/2}\}$-preserved safe diminution, it is indeed $\{\texttt{arc/2}, \texttt{col/1}\}$-preserved safe diminution. 
However, in practice, preserving predicates such as $\texttt{color/2}$ is of primary importance.
\end{example}

\begin{example}[Admissible but Unsafe Diminution of $P_1$]
\label{example:admissible}
Consider the graph shown in Figure~\ref{fig:combined}(b), define $\mathcal{D}_1=\{\texttt{1},\texttt{2},\texttt{5},\texttt{7}\}\cup\{\texttt{b},\texttt{r}\}$.  One finds $AS(P_1|_{\mathcal{D}_1})$=
{\small 
\begin{align*}
\begin{array}{l}
\{\{\texttt{color(1,b)}, \texttt{color(2,r)}, 
  \texttt{color(5,r)}, \texttt{color(2,b)}\}, \\
\{\texttt{color(1,r)}, \texttt{color(2,b)}, 
 \texttt{color(5,b)}, \texttt{color(2,r)}\}\}
\end{array}
\end{align*}}
Every $I_{\mathcal{D}_1}\in AS(P_1|_{\mathcal{D}_1})$ can be extended to a full 3-coloring of $P_1$, however, none of these sets contain the atom \texttt{color(5,g)}. However, the atom \texttt{color(5,g)} does appear in some answer set of $P_1$. This counterexample demonstrates that $\mathcal{D}_1$ is admissible but not safe. As in the previous example, this is also not a $\{\texttt{color/2}\}$-preserved admissible diminution.
\end{example}

\begin{example}[Non-Admissible Diminution of $P_1$]
\label{example:non-admissible}
Define $\mathcal{D}_{2}=HU(P_{1})\setminus\{ \texttt{1} \}$. One partial answer set of $P_{1}|_{\mathcal{D}_{2}}$ is shown in Figure~\ref{fig:combined}(c).
Here nodes $\texttt{2}$, $\texttt{3}$, and $\texttt{5}$ use all three colors $\texttt{r}$, $\texttt{g}$, $\texttt{b}$, leaving no available color for node $\texttt{1}$ without conflict. Therefore, this partial answer set cannot extend to a full 3-coloring of $P_{1}$, making $\mathcal{D}_{2}$ a non-admissible diminution.
\end{example}

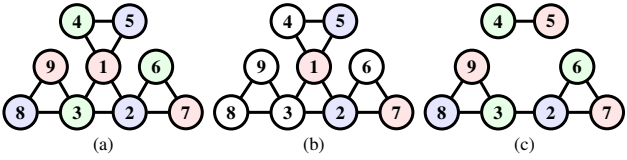
\begin{figure}[t]
  \centering
  \begin{tikzpicture}[
      scale=0.7,
      transform shape,
      every node/.style={
        circle, draw, line width=0.4mm,
        minimum size=3mm,    % smaller nodes
        node distance=1cm,
        font=\bfseries
      }
    ]
    % (a) Complete 3‑coloring for Example 1
    \begin{scope}[xshift=0cm]
      \node[fill=red!10]   (1a) at (0,0)         {1};
      \node[fill=blue!10]  (2a) at (300:1cm)     {2};
      \node[fill=green!10] (3a) at (240:1cm)     {3};
      \node[fill=green!10] (4a) at (120:1cm)     {4};
      \node[fill=blue!10]  (5a) at (60:1cm)      {5};
      \node[fill=green!10] (6a) at (0:1cm)       {6};
      \node[fill=red!10]   (7a) [right=0.4cm of 2a] {7};
      \node[fill=blue!10]  (8a) [left=0.4cm of 3a]  {8};
      \node[fill=red!10]   (9a) at (180:1cm)     {9};
      \foreach \u/\v in {
        1a/2a,1a/3a,2a/3a,1a/4a,1a/5a,4a/5a,
        2a/6a,2a/7a,6a/7a,3a/8a,3a/9a,8a/9a}
        \draw[line width=0.4mm] (\u) -- (\v);
      \node[shape=rectangle,draw=none,inner sep=0pt,font=\normalsize]
        at (0,-1.5) {(a)};
    \end{scope}

    % (b) Partial 3‑coloring for Example 3
    \begin{scope}[xshift=4cm]
      \node[fill=red!10]   (1b) at (0,0)         {1};
      \node[fill=blue!10]  (2b) at (300:1cm)     {2};
      \node[]              (3b) at (240:1cm)     {3};
      \node[]              (4b) at (120:1cm)     {4};
      \node[fill=blue!10]  (5b) at (60:1cm)      {5};
      \node[]              (6b) at (0:1cm)       {6};
      \node[fill=red!10]   (7b) [right=0.4cm of 2b] {7};
      \node[]              (8b) [left=0.4cm of 3b]  {8};
      \node[]              (9b) at (180:1cm)     {9};
      \foreach \u/\v in {
        1b/2b,1b/3b,2b/3b,1b/4b,1b/5b,4b/5b,
        2b/6b,2b/7b,6b/7b,3b/8b,3b/9b,8b/9b}
        \draw[line width=0.4mm] (\u) -- (\v);
      \node[shape=rectangle,draw=none,inner sep=0pt,font=\normalsize]
        at (0,-1.5) {(b)};
    \end{scope}

    % (c) Partial 3‑coloring for Example 4
    \begin{scope}[xshift=8cm]
      \node[fill=blue!10]  (2c) at (300:1cm)     {2};
      \node[fill=green!10] (3c) at (240:1cm)     {3};
      \node[fill=green!10] (4c) at (120:1cm)     {4};
      \node[fill=red!10]   (5c) at (60:1cm)      {5};
      \node[fill=green!10] (6c) at (0:1cm)       {6};
      \node[fill=red!10]   (7c) [right=0.4cm of 2c] {7};
      \node[fill=blue!10]  (8c) [left=0.4cm of 3c]  {8};
      \node[fill=red!10]   (9c) at (180:1cm)     {9};
      \foreach \u/\v in {
        2c/3c,4c/5c,2c/6c,2c/7c,6c/7c,3c/8c,3c/9c,8c/9c}
        \draw[line width=0.4mm] (\u) -- (\v);
      \node[shape=rectangle,draw=none,inner sep=0pt,font=\normalsize]
        at (0,-1.5) {(c)};
    \end{scope}

  \end{tikzpicture}
  \caption{(a) Complete 3‑coloring for Example~\ref{example:1}. 
           (b) Partial 3‑coloring for Example~\ref{example:admissible}.  
           (c) Partial 3‑coloring for Example~\ref{example:non-admissible}.}
  \label{fig:combined}
\end{figure}

\subsection{Properties of Diminution}

We now examine diminutions in greater detail. First, we present properties describing how limiting the constant set impacts a program’s answer sets, and then we introduce a loop-based decision procedure to test whether a given constant set qualifies as a diminution.

\begin{proposition}
Given a positive program $P$, then every $\mathcal{D}\subseteq HU(P)$ are safe-diminutions of $P$.
\end{proposition}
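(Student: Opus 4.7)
The plan is to exploit two well-known features of positive (normal) programs: grounding over a smaller constant set can only remove ground instances, and a positive ground normal program has a unique least Herbrand model that is also its unique answer set. Together these will reduce the safe-diminution check to a single subset relation.

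First I would establish the containment $P|_{\mathcal{D}}\subseteq P|_{HU(P)}$. This is immediate from the definition of grounding: any substitution $\sigma\colon V(r)\to\mathcal{D}$ is also a substitution $\sigma\colon V(r)\to HU(P)$, so every ground instance produced for $\mathcal{D}$ is already produced for $HU(P)$. Because $P$ contains no negation-as-failure, both $P|_{\mathcal{D}}$ and $P|_{HU(P)}$ are positive ground normal programs.

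Let $M_\mathcal{D}$ and $M$ denote their respective least Herbrand models. By the classical result on positive Horn programs these are the unique answer sets, so $AS(P|_{\mathcal{D}})=\{M_\mathcal{D}\}$ and $AS(P|_{HU(P)})=\{M\}$. Monotonicity of the immediate-consequence operator on positive programs yields $M_\mathcal{D}\subseteq M$, since enlarging the rule set can only enlarge the least fixed point. This single inclusion discharges both clauses of the safe-diminution definition simultaneously: the unique $I_\mathcal{D}=M_\mathcal{D}$ is contained in the unique $I=M$ (admissibility), and conversely that same $I=M$ contains $I_\mathcal{D}$ (the safe direction). Alternatively, safety may be derived from admissibility by invoking part 2 of the earlier proposition, since $|AS(P|_{HU(P)})|=1$.

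The only mild obstacle I anticipate is the bookkeeping for corner cases such as $\mathcal{D}=\emptyset$ or a $\mathcal{D}$ that drops constants occurring in the ground facts of $P$: ground facts carry no variables and so survive verbatim in $P|_{\mathcal{D}}$, leaving the inclusion $P|_{\mathcal{D}}\subseteq P|_{HU(P)}$ and the subsequent monotonicity argument unaffected. Beyond this, the proof is essentially a one-line consequence of the monotonicity of positive programs.
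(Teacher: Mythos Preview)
Your proposal is correct and follows essentially the same route as the paper: both arguments observe that $P|_{\mathcal{D}}\subseteq P|_{HU(P)}$, invoke monotonicity of the least model for positive programs to get $\mathit{LM}(P|_{\mathcal{D}})\subseteq \mathit{LM}(P|_{HU(P)})$, and then use the fact that each positive program has a unique answer set equal to its least model. Your write-up is somewhat more detailed (spelling out the substitution argument and the corner cases), but the proof strategy is identical.
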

\begin{proof}
Since $P|_{\mathcal{D}}\subseteq P|_{HU(P)}$ and both are positive, then
$\mathit{LM}\bigl(P|_{\mathcal{D}}\bigr)\subseteq\mathit{LM}(P|_{HU(P)})$ ~\cite{janhunen2007automated}, where $LM(P)$ denotes the least model of $P$.
Moreover, for any positive program $P$, its unique answer set coincides with its least model, it follows that the unique answer set $I_{\mathcal{D}}\in \mathit{AS}(P|_{\mathcal{D}})$ satisfies 
$I_{\mathcal{D}}\;\subseteq\;I_{\mathrm{full}}\in \mathit{AS}(P|_{HU(P)})$, as required.
\end{proof}

Furthermore, we indentify a class of diminutions related to the \emph{splitting set theorem}~\cite{lifschitz1994splitting}.

\begin{definition}[Splitting-Safe Diminution]
Given a program $P$ there exists at least one answer set, and a subset of constants $\mathcal{D}\subseteq HU(P)$, we call $\mathcal{D}$ a \emph{splitting-safe diminution} if the set of all ground atoms in $P|_{\mathcal{D}}$ constitutes a splitting set of $P|_{HU(P)}$.
\end{definition}

\begin{theorem}
    Given a subset~$\mathcal{D}\subseteq HU(P)$ for an ASP program~$P$, if $\mathcal{D}$ is a splitting-safe diminution of $P$, then $\mathcal{D}$ is a safe diminution of $P$.
\end{theorem}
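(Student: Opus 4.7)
The plan is to apply the splitting set theorem of Lifschitz and Turner to $P|_{HU(P)}$. Write $U$ for the set of all ground atoms occurring in $P|_\mathcal{D}$; by hypothesis $U$ is a splitting set of $P|_{HU(P)}$. The strategy is to show that the ``bottom'' part $b_U(P|_{HU(P)})$ coincides with $P|_\mathcal{D}$ itself, and then read off the safe-diminution property from the splitting theorem.

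First I would record the easy inclusion $P|_\mathcal{D} \subseteq b_U(P|_{HU(P)})$: every rule of $P|_\mathcal{D}$ is a ground instance in $P|_{HU(P)}$ all of whose atoms lie in $U$ by the very definition of $U$. The reverse inclusion is the technical heart: given $r^{*}\sigma \in b_U(P|_{HU(P)})$ with every atom in $U$, safety of $P$ forces each variable of $r^{*}$ to occur in the positive body, so the value of $\sigma$ at that variable is witnessed by a constant sitting inside some atom of $U$; using the splitting condition together with the explicit way atoms enter $U$, one traces this constant back and argues it must in fact lie in $\mathcal{D}$, hence $r^{*}\sigma \in P|_\mathcal{D}$. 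Once this identification is in hand, the splitting theorem gives a bijection between $AS(P|_{HU(P)})$ and pairs $(I_U, J)$ with $I_U \in AS(b_U(P|_{HU(P)})) = AS(P|_\mathcal{D})$ and $J \in AS(e_U(P|_{HU(P)} \setminus b_U(P|_{HU(P)}), I_U))$, decomposing each answer set as $I = I_U \cup J$.

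From this decomposition both halves of safety are short. For the ``safe'' direction, given $I \in AS(P|_{HU(P)})$ I take $I_\mathcal{D} := I \cap U$; by the splitting theorem this is in $AS(P|_\mathcal{D})$ and it clearly satisfies $I_\mathcal{D} \subseteq I$. For admissibility, given $I_\mathcal{D} \in AS(P|_\mathcal{D})$ I would combine the standing hypothesis $AS(P|_{HU(P)}) \neq \emptyset$ with a standard argument on the top program to produce a $J$ with $I_\mathcal{D} \cup J \in AS(P|_{HU(P)})$ and $I_\mathcal{D} \subseteq I_\mathcal{D}\cup J$. The main obstacle I expect is the reverse inclusion $b_U(P|_{HU(P)}) \subseteq P|_\mathcal{D}$: ground facts of $P$ whose constants lie outside $\mathcal{D}$ still put atoms into $U$, so in principle a ground instance $r^{*}\sigma$ with $\sigma$ mapping some variable outside $\mathcal{D}$ might drop into $b_U(P|_{HU(P)})$ without lying in $P|_\mathcal{D}$. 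Pinning down exactly why the splitting condition rules this out—or, failing that, replacing the identification by a weaker correspondence between $AS(P|_\mathcal{D})$ and $AS(b_U(P|_{HU(P)}))$ that still yields the subset inclusions needed for safety and admissibility—is the delicate step.
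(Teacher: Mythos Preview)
Your plan mirrors the paper's proof sketch exactly: identify $P|_{\mathcal{D}}$ with the bottom $b_U(P|_{HU(P)})$ and then read both directions off the splitting theorem. The paper simply asserts that ``$P|_{\mathcal D}$ serves as the bottom $B$'' and moves on; you are more careful in isolating the reverse inclusion $b_U(P|_{HU(P)}) \subseteq P|_{\mathcal{D}}$ as the crux, and in noting that ground rules carrying constants outside $\mathcal{D}$ threaten it.

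Your worry is justified, and the obstacle is fatal rather than technical. Take
\[
P=\bigl\{\,p(a,b).\quad q(X)\gets p(X,Y).\quad s(a)\gets \mathit{not}\ q(a).\,\bigr\},\qquad \mathcal{D}=\{a\}.
\]
Then $P|_{\mathcal{D}}=\{\,p(a,b).\ \ q(a)\gets p(a,a).\ \ s(a)\gets\mathit{not}\ q(a).\,\}$, so $U=\{p(a,b),p(a,a),q(a),s(a)\}$. One checks directly that $U$ is a splitting set of $P|_{HU(P)}$ and that $AS(P|_{HU(P)})=\{\{p(a,b),q(a)\}\}\neq\emptyset$, so $\mathcal{D}$ is splitting-safe by the paper's definition. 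Yet the ground instance $q(a)\gets p(a,b)$ lies in $b_U(P|_{HU(P)})$ but not in $P|_{\mathcal{D}}$; its presence forces $AS(b_U)=\{\{p(a,b),q(a)\}\}$ while $AS(P|_{\mathcal{D}})=\{\{p(a,b),s(a)\}\}$. Since $s(a)$ belongs to no answer set of $P|_{HU(P)}$, the diminution $\mathcal{D}$ is not even admissible, let alone safe. Hence neither the equality $b_U=P|_{\mathcal{D}}$ nor your fallback ``weaker correspondence between $AS(P|_\mathcal{D})$ and $AS(b_U)$'' can be salvaged: the gap you flagged is in fact a counterexample to the statement as written, and the paper's sketch glosses over precisely the step where the argument breaks.
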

\begin{myproof}
Since the $atom(P|_{\mathcal D})$ is a splitting set of the $P|_{HU(P)}$, therefore, $P|_{\mathcal D}$ serves as the \emph{bottom} $B$ of $P|_{HU(P)}$. Define $T = P|_{HU(P})\setminus B$. The splitting theorem guarantees that each answer set of $B$ extends-with atoms from $T$-to an answer set of the $P|_{HU(P)}$, and each answer set of the $P|_{HU(P)}$ restricts to one of $B$. Therefore $\mathcal{D}$ is a safe diminution of $P$.
\end{myproof}

Verifying $\mathcal{D}$ is a splitting set can by a linear scan of the ground program, yet it still be overly restrictive. We therefore define a simple syntactic class of programs such that, for any program $P$ in this class and any $\mathcal{D}\subseteq HU(P)$ is a safe diminution.

\begin{definition}[Term‑Preserved Program]\label{def:term_preserve}
A normal rule $r$ is \emph{term‑preserved} if $C(body(r)) \subseteq C(head(r))$ and $V(body(r)) \subseteq V(head(r))$.
A normal program is \emph{term‑preserved} when all its rules are term‑preserved.
\end{definition}

\begin{example}[Triangle detection, term-preserved program]\label{ex:triangle}
Consider the following program, which determines which triples in a given edges form triangles.
\begin{small}
\begin{verbatim}
edge(a,b). edge(b,c). edge(c,a).
tri(X,Y,Z):-edge(X,Y),edge(Y,Z),edge(Z,X).
\end{verbatim}
\end{small}
Each body variable $X,Y,Z$ reappears in the head, and the facts have empty
bodies, so the entire program is term-preserved.
\end{example}

To construct a term-preserved program that ensures every $\mathcal{D}\subseteq HU(P)$ is a safe diminution, we apply the Domain Predicate Lifting procedure which is detailed in Appendix~A.1. In brief
this procedure transforms $P$ into $P^\uparrow$, where for each constant $c\in C(P)$, it introduces a fresh variable $v_c$ and a domain predicate $p_c$, rewriting each occurrence of $c$ by $v_c$ guarded with $p_c(v_c)$, and adding the fact $p_c(c)$.  As a result, for any $\mathcal{D}\subseteq HU(P)$, each answer set of $P^\uparrow|_{\mathcal{D}}$ is obtained by extending some $I\in AS(P|_{\mathcal{D}})$ with the facts $\{p_c(c)\mid c\in C(P)\}$. Because the transformation does not introduce new constants, $HU(P^\uparrow)=HU(P)$.

\begin{theorem}
\label{thm:D_preserve}
Let $P$ be a term‑preserved program there exists at least one answer set, every $D\subseteq HU(P)$ is a safe diminution of $P$.
\end{theorem}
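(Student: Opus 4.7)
\begin{myproof}
The plan is to reduce the claim to the earlier result that every splitting-safe diminution is safe. Concretely, I will show that for a term-preserved $P$ and any $D \subseteq HU(P)$, the set $U := atom(P|_{D})$ of atoms occurring in the $D$-grounding forms a splitting set of $P|_{HU(P)}$; once this is established, $D$ is splitting-safe by definition, and the preceding theorem immediately delivers safeness.

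I would first unpack Definition~\ref{def:term_preserve}. For any rule $r\in P$ and any substitution $\sigma\colon V(r)\to HU(P)$, the inclusions $C(body(r))\subseteq C(head(r))$ and $V(body(r))\subseteq V(head(r))$ force every constant appearing in an atom of $body(r)\sigma$ to already appear in $head(r)\sigma$. Intuitively, a term-preserved body cannot introduce a constant fresh to its head, so the vocabulary of any ground instance is completely determined by the instantiation of the head variables.

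Next I would verify the splitting property: for $r\sigma\in P|_{HU(P)}$ with $head(r\sigma)\in U$, I need $atom(r\sigma)\subseteq U$. Since $head(r\sigma)\in atom(P|_D)$, some $r'\in P$ and $\sigma'\colon V(r')\to D$ witness $head(r\sigma)$ inside a ground instance $r'\sigma'\in P|_D$, so every constant appearing in $head(r\sigma)$ is realized within the $D$-grounding. Combined with the first step, each body atom $b$ of $r\sigma$ is built from constants already present in $head(r\sigma)$, hence from the vocabulary used by $P|_D$, and one then matches $b$ against an atom occurring in a ground instance of $P|_D$---either $r'\sigma'$ itself or another ground instance whose head template shares the relevant positions. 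The main obstacle lies precisely in this traceability: body atoms can be contributed by several different rules or by ground facts, so the argument must handle all such contributions uniformly, and term-preservation is exactly what prevents any body atom from escaping the vocabulary anchored by the head, thereby closing $r\sigma$ under $U$.

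Once $U$ is confirmed to be a splitting set of $P|_{HU(P)}$, the subset $D$ satisfies the definition of a splitting-safe diminution, and the earlier theorem on splitting-safe diminutions yields that $D$ is a safe diminution of $P$. The hypothesis $AS(P)\neq\emptyset$ is invoked at this step to ensure that the admissibility component of safeness is non-vacuous and that the splitting-set argument applies to a concrete answer set.
\end{myproof}
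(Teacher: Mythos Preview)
Your reduction to splitting-safe diminutions is natural, but the central step---that $U := \mathit{atom}(P|_D)$ is a splitting set of $P|_{HU(P)}$---is false for term-preserved programs in general. The obstruction is precisely the ``traceability'' you flag but do not resolve: the same head atom can arise from two rules whose constant/variable positions are swapped, so membership of $\mathit{head}(r\sigma)$ in $\mathit{atom}(P|_D)$ does not force $\sigma$ to range over $D$. Take
\[
P=\bigl\{\; q(a).\quad s(X)\gets q(X).\quad p(b,X)\gets q(X).\quad p(Y,a)\gets s(Y).\;\bigr\}
\]
with $D=\{a\}$. Every rule is term-preserved and $P$ has a unique answer set. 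Grounding over $D$ yields $p(b,a)\in U$ via the third rule with $X{=}a$; yet the full grounding also contains $p(b,a)\gets s(b)$ (from the fourth rule with $Y{=}b$), and $s(b)\notin U$. Hence $U$ is not a splitting set of $P|_{HU(P)}$. Your observation that every constant of a body atom already occurs in $\mathit{head}(r\sigma)$ is correct, but it only guarantees that those constants appear \emph{somewhere} in $P|_D$, not that the specific body atom does.

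The paper sidesteps this aliasing by first passing to the lifted program $P^{\uparrow}$, in which every non-fact rule is constant-free, and then arguing inductively one constant $c$ at a time: the atoms of $P^{\uparrow}|_{D\cup\{c\}}$ not mentioning $c$ form a splitting set, because in a constant-free term-preserved rule the head instance pins down the entire substitution (so if the head avoids $c$, the whole ground rule does). That normalization step is exactly what your direct argument on $P$ is missing.
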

\begin{myproof}
Write $P^{\uparrow}|_D=\mathcal F^{\neg c}\cup\mathcal F^{c}\cup P_1$
and $P^{\uparrow}|_{D'}=\mathcal F^{\neg c}\cup\mathcal F^{c}\cup P_1\cup P_2$,
where $\mathcal F$ is the fact set of $P^{\uparrow}|_D$,  
$\mathcal F^{\neg c}=\{\,f\in\mathcal F \mid c \text{ does not occur in }f\,\}$,
and $\mathcal F^{c}=\mathcal F\setminus\mathcal F^{\neg c}$.
All atoms in $P_1\cup\mathcal F^{\neg c}$ omit the constant $c$, 
whereas every rule head in $P_2\cup\mathcal F^{c}$ contains $c$.  
Hence  
$U=\mathrm{atoms}(P_1)\cup\mathcal F^{\neg c}$,
which contains no atom mentioning $c$, is a splitting set of
$P^{\uparrow}|_{D'}$.
Because every fact in $\mathcal F$ is present in every answer set,
each $I'\in AS(P^{\uparrow}|_{D'})$ extends some
$I\cup\mathcal F^{c}\in AS(P^{\uparrow}|_D)$, and conversely
every $I\cup\mathcal F^{c}\in AS(P^{\uparrow}|_D)$ extends to
an $I'\in AS(P^{\uparrow}|_{D'})$.  
Therefore $\mathcal D$ is a safe diminution.
\end{myproof}

We use \emph{strong equivalence}~\cite{lin2002reducing,turner2003strong}
to identify program rewriting that preserve answer sets in every context. Programs $P$ and $Q$ are strongly equivalent, for any program $R$, the unions $P\cup R$ and $Q\cup R$ have identical answer sets.
\begin{proposition}
\label{prop:adm-not-transfer}
Let $P_{1}$ and $P_{2}$ be programs such that
$P_{1}|_{HU(P_{1})}$ and $P_{2}|_{HU(P_{2})}$ are
\emph{strongly equivalent}, an admissible diminution for $P_{1}$ need not be an admissible
diminution for $P_{2}$.
\end{proposition}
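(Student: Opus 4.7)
The plan is to establish the ``need not'' by exhibiting a concrete counterexample. Since the hypothesis only requires $P_{1}|_{HU(P_{1})}$ and $P_{2}|_{HU(P_{2})}$ to be strongly equivalent, the cleanest arrangement is to make those two ground programs \emph{identical}: pick a non-ground $P_{2}$ and set $P_{1}:=P_{2}|_{HU(P_{2})}$. Then $P_{1}$ is ground, $HU(P_{1})=HU(P_{2})$, and $P_{1}|_{HU(P_{1})}=P_{1}=P_{2}|_{HU(P_{2})}$, so the strong-equivalence hypothesis holds trivially.

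Next, I would engineer $P_{2}$ so that, for some proper $\mathcal{D}\subsetneq HU(P_{2})$, the partial ground program $P_{2}|_{\mathcal{D}}$ admits an answer set while $P_{2}|_{HU(P_{2})}$ has none. The device is a non-ground rule whose full instantiation forces an atom forbidden by a ground constraint, while the restricted instantiation never reaches that atom. A concrete choice is
\begin{equation*}
P_{2}=\{p(X,Y)\gets q(X),q(Y);\ q(a);\ q(b);\ \gets p(a,b)\},\ \mathcal{D}=\{a\}.
\end{equation*}
Full grounding includes $p(a,b)\gets q(a),q(b)$, forcing $p(a,b)$ and violating the constraint, so $AS(P_{2}|_{HU(P_{2})})=\emptyset$. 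The restricted grounding contains only $p(a,a)\gets q(a),q(a)$ from the variable rule, so $p(a,b)$ is not derivable and $\{q(a),q(b),p(a,a)\}\in AS(P_{2}|_{\mathcal{D}})$; because $AS(P_{2}|_{HU(P_{2})})$ is empty, this answer set cannot be extended, and $\mathcal{D}$ is \emph{not} admissible for $P_{2}$.

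For $P_{1}$, every rule is already ground, so the grounding definition $r|_{\mathcal{D}}=\{r\sigma\mid\sigma\colon V(r)\to\mathcal{D}\}$ collapses to $r|_{\mathcal{D}}=\{r\}$ (the empty substitution). Hence $P_{1}|_{\mathcal{D}}=P_{1}=P_{1}|_{HU(P_{1})}$, every answer set of $P_{1}|_{\mathcal{D}}$ is already an answer set of $P_{1}|_{HU(P_{1})}$, and admissibility is immediate (vacuously so here, matching item~3 of the earlier proposition on basic properties). Combining both halves yields a single $\mathcal{D}$ admissible for $P_{1}$ but not for $P_{2}$, proving the claim.

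The main obstacle is conceptual rather than computational: one must notice that strong equivalence is a property of the fully instantiated programs, whereas admissibility depends on how the non-ground rules are syntactically written, because grounding over $\mathcal{D}$ substitutes only into variables and leaves already-ground rules intact. Exploiting this mismatch — making $P_{1}$ already ground so diminution is a no-op, while $P_{2}$ retains a variable rule whose ground instances shrink with $\mathcal{D}$ — is the crucial step; once it is in place, verifying the small answer sets and the constraint violation is routine.
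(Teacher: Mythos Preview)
Your proof is correct and follows the same overall strategy as the paper: take $P_{1}$ to be fully ground (so that $P_{1}|_{\mathcal{D}}=P_{1}$ regardless of $\mathcal{D}$) and let $P_{2}$ differ from $P_{1}$ only by packaging some ground rules into a single rule with variables, so that $P_{2}|_{\mathcal{D}}$ loses instances that $P_{1}|_{\mathcal{D}}$ retains. The paper's concrete example is slightly different in flavour: there $P_{1}|_{HU(P_{1})}=P_{2}|_{HU(P_{2})}$ does have an answer set, and the restricted program $P_{2}|_{\{a\}}$ produces an answer set containing an atom (\texttt{r(c)}, derived because \texttt{r(b)} is no longer supported) that is absent from every full answer set. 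Your example instead makes the full program inconsistent and relies on vacuous admissibility for $P_{1}$; this is a bit more economical but arguably less illustrative, since it leans on the degenerate case $AS(P_{1}|_{\mathcal{D}})=\emptyset$ rather than exhibiting a genuine failure of extendability when answer sets are present. Either way, the conceptual point you articulate---that strong equivalence concerns the fully grounded programs while admissibility is sensitive to how variables are syntactically distributed---is exactly the mechanism driving both counterexamples.
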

\begin{proof}
Consider two programs that become identical once fully grounded.
Let $P_1$ be
\begin{small}
\begin{verbatim}
p(a). p(b). r(a) :- p(a). r(b) :- p(b).
r(c) :- p(c). r(c) :- not r(b).
\end{verbatim}
\end{small}
and let $P_2$ be
\begin{small}
\begin{verbatim}
p(a). p(b). r(X) :- p(X). r(c) :- not r(b).
\end{verbatim}
\end{small}

Fix the restricted constant set
$\mathcal{D}=\{\texttt{a}\}\subseteq HU(P_1)=HU(P_2)=\{\texttt{a,b,c}\}$.
$P_1|_\mathcal{D}$ yielding the single answer set
$I=\{\texttt{p(a)},\texttt{p(b)},\texttt{r(a)},\texttt{r(b)}\}\in AS(P_1)$. Hence $\mathcal{D}$ is an admissible diminution of $P_1$. $P_2|_\mathcal{D}$ yielding the single answer set
$J=\{\texttt{p(a)},\texttt{p(b)},\texttt{r(a)},\texttt{r(c)}\}$. 
No answer set of the $P_2$ can contain \texttt{r(c)}, therefore $\mathcal{D}$ is \emph{not} an admissible diminution of $P_2$.
\end{proof}

Then, we present the computational complexity of deciding an admissible or safe diminution.
\begin{theorem}
\label{theoram:complexity}
Given a program $P$ and a subset~$\mathcal{D}$ of $HU(P)$, deciding whether $\mathcal{D}$ is an admissible diminution of $P$ is $\mathsf{coNP}$-hard; deciding whether $\mathcal{D}$ is a safe diminution of $P$ is $\mathsf{coNP}$-hard.
\end{theorem}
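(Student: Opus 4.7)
The plan is to establish both hardness claims through a single polynomial-time many-one reduction from the classically $\mathsf{coNP}$-complete problem of deciding whether a normal propositional logic program admits no answer set. Given an input program $Q$ over atoms $a_1,\dots,a_n$, the reduction will produce a pair $(P,\mathcal{D})$ so that $\mathcal{D}$ is an admissible diminution of $P$ iff $\mathcal{D}$ is a safe diminution of $P$ iff $Q$ has no answer set, which settles both statements simultaneously.

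I would construct $P$ as follows. Introduce a unary predicate $p$, a unary predicate $\mathtt{extra}$, a nullary atom $\mathtt{bad}$, and one fresh constant $d$. For every rule of $Q$, emit the corresponding rule of $P$ after replacing each $a_i$ by $p(i)$; these translated rules are already ground. In addition, include the fact $\mathtt{extra}(d).$, the safe rule $\mathtt{bad}\leftarrow \mathtt{extra}(X).$, and the constraint $\leftarrow \mathtt{bad}.$. Set $\mathcal{D}=\{1,\dots,n\}$, so that $HU(P)=\mathcal{D}\cup\{d\}$. The reduction is clearly polynomial in the size of $Q$.

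The pivotal step is a careful side-by-side inspection of $P|_\mathcal{D}$ and $P|_{HU(P)}$. Because the variable $X$ in $\mathtt{bad}\leftarrow\mathtt{extra}(X)$ is instantiated only over the current grounding domain (per the definition of $r|_\mathcal{D}$ in the preliminaries), the ground instance $\mathtt{bad}\leftarrow\mathtt{extra}(d)$ appears in $P|_{HU(P)}$ but not in $P|_\mathcal{D}$. Since $\mathtt{extra}(d)$ is the only source of any $\mathtt{extra}$-atom, one checks that $\mathtt{bad}$ is unreachable in $P|_\mathcal{D}$, the constraint stays silent, and the translated $Q$-fragment drives answer sets unchanged; hence $AS(P|_\mathcal{D})\neq\emptyset$ iff $AS(Q)\neq\emptyset$. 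Dually, in $P|_{HU(P)}$ the new instance fires, $\mathtt{bad}$ is forced true from the fact $\mathtt{extra}(d)$, the constraint is violated, and $AS(P|_{HU(P)})=\emptyset$ independently of $Q$.

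The correctness argument then splits into two cases. If $Q$ has no answer set, $AS(P|_\mathcal{D})=AS(P|_{HU(P)})=\emptyset$, making the universal quantifiers in the definitions of admissible and safe diminution vacuously satisfied. If $Q$ has some answer set, then $AS(P|_\mathcal{D})$ contains an element that cannot be extended to any element of the empty set $AS(P|_{HU(P)})$, so admissibility fails, and consequently safety fails as well. The main obstacle is exactly engineering the extra machinery around $d$, $\mathtt{extra}$, and $\mathtt{bad}$ to behave asymmetrically between the two groundings\textemdash inert under $\mathcal{D}$ yet inconsistency-forcing under $HU(P)$\textemdash which the single non-ground rule $\mathtt{bad}\leftarrow\mathtt{extra}(X)$ accomplishes; everything else in the reduction is routine bookkeeping.
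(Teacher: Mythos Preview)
Your proposal is correct and follows essentially the same strategy as the paper: build $P$ so that $P|_{HU(P)}$ is always inconsistent while $P|_{\mathcal D}$ has an answer set iff a chosen $\mathsf{NP}$-hard problem is a yes-instance, making admissibility and safety coincide with the problem's no-instances. The only cosmetic differences are the source problem (you reduce from answer-set nonexistence for normal programs, the paper from 3-UNSAT) and the inconsistency gadget (your $\mathtt{extra}(d)/\mathtt{bad}$ device versus the paper's two facts $f(o_1),f(o_2)$ together with the constraint $\leftarrow f(x),f(y),x\neq y$).
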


\begin{myproof}
We can construct a problem~$P$ from a 3-SAT problem by adding $\{ a \gets \mathit{not}~a'. \ a'\gets \mathit{not}~a.\}$ for each atom $a$, without loss of generality, for each clause $\neg a \lor b \lor \neg c$ adding $\{ \gets a,\, \mathit{not}~ b,\, c.\}$, and adding $\{f(o_1).\ f(o_2).\ \gets f(x), f(y), x\neq y. \}$. $HU(P) = \{o_1, o_2\}$, $P|_{HU(P)}$ has no answer sets, $P|_{\{o_1\}}$ has an answer set if{f} the 3-SAT problem is satisfiable.
Then $\{o_1\}$ is an admissible or a safe diminution if and only if the 3-SAT problem is unsatisfiable.
\end{myproof}

By Theorem~\ref{theoram:complexity}, identifying admissible and safe diminutions in a given programs is computationally challenging. In many applications, our goal is not to enumerate all answer sets, but to obtain at least one solution that is practically useful.

With the help of the notions of loops and loop formulas, we can provide a sufficient condition for \emph{admissible diminution}.

\begin{definition}[Loop-Admissible Diminution]
    Let $P$ be a program (with variables) and $\mathcal{D} \subseteq HU(P)$. We call $\mathcal{D}$ a \emph{loop-admissible diminution} of $P$ if
    \begin{enumerate}
        \item for every answer set $I_{\mathcal{D}}$ of $P|_{\mathcal{D}}$, there exists an interpretation $I'$ such that $I_{\mathcal{D}} \cup I'$ satisfies rules in $P|_{HU(P)}$ and loop formulas for every loops $L'$ of $P|_{HU(P)}$ with $L'\subseteq I'$, and 
        \item there does not exist a loop $L$ of $P|_{HU(P)}$ such that $L$ is not a loop of $P|_{\mathcal{D}}$ and $L$ contains a loop $L'$ of $P|_{\mathcal{D}}$ with $R^-(L', P|_{\mathcal{D}}) \neq \emptyset$.
    \end{enumerate}
\end{definition}
Intuitively, we require the rules in $P|_{HU(P)} \setminus P|_{\mathcal{D}}$ and newly introduced loop formulas can be satisfied by expanding $I_{\mathcal{D}}$ with some $I'$.

\begin{theorem}
\label{theorem:loop_admissible}
Given a subset~$\mathcal{D}$ of $HU(P)$ for an ASP program~$P$, if $\mathcal{D}$ is a loop-admissible diminution of $P$, then $\mathcal{D}$ is an admissible diminution of $P$.
\end{theorem}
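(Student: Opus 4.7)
The plan is to invoke Theorem~\ref{theorem:loop} and reduce the goal to showing that, for any $I_\mathcal{D}\in AS(P|_\mathcal{D})$, the extension $I:=I_\mathcal{D}\cup I'$ supplied by condition~(1) is (i)~a model of $P|_{HU(P)}$ and (ii)~satisfies $LF(L,P|_{HU(P)})$ for every loop $L$ of $P|_{HU(P)}$. Condition~(1) already delivers (i) and also handles (ii) for every loop $L'$ with $L'\subseteq I'$, so the work concentrates on loops of $P|_{HU(P)}$ that meet $I_\mathcal{D}$; loops not contained in $I$ trivialize the antecedent of the loop formula and can be discarded.

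The first auxiliary step I would establish is a structural lemma: any loop $L$ of $P|_{HU(P)}$ whose atoms only use constants of $\mathcal{D}$ is also a loop of $P|_\mathcal{D}$. The argument is short: safety forces every variable of a rule to appear in $\mathit{body}^+(r)$, so a ground instance whose head and positive body use only $\mathcal{D}$-constants has its remaining atoms over $\mathcal{D}$ too and therefore lies in $P|_\mathcal{D}$; hence the positive-dependency subgraphs of $P|_\mathcal{D}$ and $P|_{HU(P)}$ induced by $L$ coincide, and strong connectivity is preserved.

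With that lemma I would split the remaining loops $L\subseteq I$ into two cases. In Case~1, $L\subseteq I_\mathcal{D}$: the lemma places $L$ in $\mathit{loop}(P|_\mathcal{D})$, and Theorem~\ref{theorem:loop} applied to $I_\mathcal{D}\in AS(P|_\mathcal{D})$ yields some $r\in R^-(L,P|_\mathcal{D})\subseteq R^-(L,P|_{HU(P)})$ whose body is true in $I_\mathcal{D}$; I then need to certify that $\mathit{body}^-(r)$ is not poisoned by atoms from $I'$, which I will track by instantiating condition~(1) with a choice of $I'$ minimal enough to leave such negative bodies intact. In Case~2, $L$ straddles both $I_\mathcal{D}$ and $I'$ and is not a loop of $P|_\mathcal{D}$: here condition~(2) is the workhorse. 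Because $L$ contains no sub-loop of $P|_\mathcal{D}$ with nonempty external support, every atom of $L\cap I_\mathcal{D}$ is derivable (inside $I_\mathcal{D}$) only through rules whose positive body exits into $L\cap I'$; consequently an external support of $L$ in $P|_{HU(P)}$ must originate from a rule whose head lies in $I'$, and $LF(L,P|_{HU(P)})$ then follows from the loop formulas already guaranteed by condition~(1) on sub-loops inside $I'$.

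The main obstacle, as I see it, is the delicate bookkeeping in Case~1: an external support rule that fires inside the GL reduct of $P|_\mathcal{D}$ at $I_\mathcal{D}$ can in principle be silently switched off in the reduct of $P|_{HU(P)}$ at $I$ if $I'$ happens to intersect its negative body. Discharging this cleanly will probably require strengthening the selection of $I'$ beyond the literal statement of condition~(1) (for instance, choosing $I'$ to be $\subseteq$-minimal subject to the clauses of condition~(1)) and then arguing that such a choice is always available; a supplementary replacement argument may be needed to exhibit, whenever a witness is lost, a compensating external support supplied from $P|_{HU(P)}\setminus P|_\mathcal{D}$, with condition~(2) ensuring that no obstruction arises from the $I_\mathcal{D}$-side of $L$.
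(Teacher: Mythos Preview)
Your overall plan---invoke Theorem~\ref{theorem:loop} and split loops of $P|_{HU(P)}$ into (i) those not contained in $I$, (ii) those inside $I_{\mathcal D}$, (iii) those inside $I'$, and (iv) those straddling---is exactly the paper's approach: the paper's proof of Theorem~\ref{theorem:loop_admissible} is a one-line sketch saying the argument mirrors that of Theorem~\ref{thm:elementaryLoopAdmissible}, and the latter carries out precisely this four-case analysis. Your auxiliary lemma (a loop of $P|_{HU(P)}$ over $\mathcal D$-constants is already a loop of $P|_{\mathcal D}$) is left implicit in the paper's Case~2.

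The obstacle you isolate in your Case~1---that an external support $r\in R^-(L,P|_{\mathcal D})$ with body true in $I_{\mathcal D}$ might have $\mathit{body}^-(r)\cap I'\neq\emptyset$ and hence fail in $I_{\mathcal D}\cup I'$---is real, but the paper's proof of Theorem~\ref{thm:elementaryLoopAdmissible} does not address it either: its Case~2 jumps from ``$I_{\mathcal D}$ satisfies $r\in R^-(eL,P|_{\mathcal D})$'' directly to ``$LF(eL,P|_{HU(P)})$ is satisfied by $I_{\mathcal D}\cup I'$'' without checking that $I'$ leaves the negative body intact. So this is a gap shared with the paper rather than a divergence from it, and your proposed remedies (selecting $I'$ $\subseteq$-minimal, or a replacement argument exploiting condition~(2)) already go beyond what the paper supplies.
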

\begin{myproof}
The argument mirrors that of Theorem~\ref{thm:elementaryLoopAdmissible}:
replace the \emph{elementary loop} with \emph{loop} and
observe that theorem~\ref{theorem:loop_admissible} still holds.
\end{myproof}
To reduce the number of loops required when deciding whether a diminution is admissible, we introduce the notion of an \emph{elementary-loop-admissible diminution}.
\begin{definition}[Elementary-Loop-Admissible Diminution]
\label{def:el ad diminution}
Let $P$ be a program(with variables) and $\mathcal{D}\subseteq HU(P)$. we call $\mathcal{D}$ an elementary-loop-admissible diminution of $P$ if
\begin{enumerate}
    \item for every answer set $I_\mathcal{D}$ of $P|_\mathcal{D}$, there exist an interpretation $I'$ such that $I_\mathcal{D}\cup I'$ is a \emph{model} of $P|_{HU(P)}$ and 

    \item for every elementary loops $eL$ of $P|_{HU(P)}$ such that $eL\subseteq I'$, the interpretation $I_{\mathcal{D}}\cup I'$ satisfies $LF(eL, P)$
    
    \item No elementary loop $eL$ of $P|_{HU(P)}$ that is not an elementary loop of $P|_\mathcal{D}$ and there exist elementary loops $eL'$ of $P|_{\mathcal{D}}$ such that $R^-(eL', P|_\mathcal{D}) \neq \emptyset$.
\end{enumerate}
\end{definition}
\begin{theorem}
\label{thm:elementaryLoopAdmissible}
Given a subset $\mathcal{D}$ of $HU(P)$ for given program $P$, if $\mathcal{D}$ is an elementary-loop-admissible reduction of $P$, then $\mathcal{D}$ is an admissible reduction of $P$.
\end{theorem}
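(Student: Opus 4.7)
The plan is to reduce the claim to an application of Theorem~\ref{thm:eloop}. Given an arbitrary $I_\mathcal{D}\in AS(P|_\mathcal{D})$, I would invoke Definition~\ref{def:el ad diminution} to extract the witness interpretation $I'$ and set $I:=I_\mathcal{D}\cup I'$. Condition~(1) already guarantees that $I$ is a model of $P|_{HU(P)}$, so by Theorem~\ref{thm:eloop} it suffices to establish $I\models LF(eL,P|_{HU(P)})$ for every elementary loop $eL$ of $P|_{HU(P)}$. Once this is done, $I$ is an answer set of $P|_{HU(P)}$ that extends $I_\mathcal{D}$, which is exactly the admissibility requirement.

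For the loop-formula check I would split on the position of $eL$ relative to $I'$ and $I_\mathcal{D}$. If $eL\not\subseteq I$, the antecedent $\bigwedge_{A\in eL} A$ is false in $I$ and $LF(eL,P|_{HU(P)})$ holds vacuously. If $eL\subseteq I'$, condition~(2) supplies the required external witness directly. The core case is $eL\subseteq I$ with $eL\cap I_\mathcal{D}\neq\emptyset$; I would further sub-divide according to whether $eL$ is itself an elementary loop of $P|_\mathcal{D}$. In the affirmative sub-case, $I_\mathcal{D}\in AS(P|_\mathcal{D})$ combined with Theorem~\ref{thm:eloop} applied to $P|_\mathcal{D}$ yields some $r\in R^-(eL,P|_\mathcal{D})$ with $I_\mathcal{D}\models body(r)$, and using $R^-(eL,P|_\mathcal{D})\subseteq R^-(eL,P|_{HU(P)})$ one lifts $r$ to a witness for $LF(eL,P|_{HU(P)})$. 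In the negative sub-case condition~(3) kicks in: $eL$ cannot contain any elementary loop $eL'$ of $P|_\mathcal{D}$ with $R^-(eL',P|_\mathcal{D})\neq\emptyset$, and I would combine this with the strongly-connected-component characterization of elementary loops to argue that the external support for $eL$ must come from rules in $P|_{HU(P)}\setminus P|_\mathcal{D}$ whose bodies are already verified by $I$ through its $I'$-part, hence by condition~(2) applied to the appropriate sub-loop.

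I expect the main obstacle to be the subtle interaction between the extra atoms in $I'\setminus I_\mathcal{D}$ and the \emph{negative} body literals of rules in $R^-(eL,P|_\mathcal{D})$: a rule body satisfied under $I_\mathcal{D}$ may be falsified in $I$ once a fresh atom enters a negation-as-failure literal, so the lifting of witnesses is not entirely routine. Handling this requires leveraging condition~(3) to prevent the promotion of a $P|_\mathcal{D}$-elementary loop into a non-elementary sub-structure of a larger $P|_{HU(P)}$-elementary loop, and then appealing to the model condition on $I$ to produce a replacement witness within $R^-(eL,P|_{HU(P)})$ whenever the original one is destroyed by negation. Since the author notes that the argument is reused verbatim in Theorem~\ref{theorem:loop_admissible} with ``loop'' replacing ``elementary loop'', I would structure the case analysis so that elementarity is invoked only through condition~(3) and through Theorem~\ref{thm:eloop}, keeping the remainder of the argument uniform across the two variants.
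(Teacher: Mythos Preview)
Your reduction to Theorem~\ref{thm:eloop} and the vacuous and $eL\subseteq I'$ cases match the paper exactly. The divergence is in how you treat $eL\subseteq I$ with $eL\cap I_\mathcal{D}\neq\emptyset$. The paper does \emph{not} sub-split on whether $eL\in eloop(P|_\mathcal{D})$; instead it distinguishes $eL\subseteq I_\mathcal{D}$ from the straddling case $eL\cap I_\mathcal{D}\neq\emptyset$ and $eL\cap I'\neq\emptyset$. In the former it invokes $R^-(eL,P|_\mathcal{D})\subseteq R^-(eL,P|_{HU(P)})$ directly; in the latter it does not try to produce a witness at all but argues the case is \emph{impossible}: from $I_\mathcal{D}\in AS(P|_\mathcal{D})$ one extracts an elementary loop $eL'\subseteq eL\cap I_\mathcal{D}$ of $P|_\mathcal{D}$ with $R^-(eL',P|_\mathcal{D})\neq\emptyset$, and since a straddling $eL$ is not an elementary loop of $P|_\mathcal{D}$, this contradicts condition~(3). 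Your plan for the ``negative sub-case''---locating external support inside $P|_{HU(P)}\setminus P|_\mathcal{D}$ and then invoking condition~(2) on some ``appropriate sub-loop''---is both harder and not justified as stated: condition~(2) only constrains elementary loops contained in $I'$, so it yields nothing for a loop that meets $I_\mathcal{D}$. There is also a slip in your ``affirmative sub-case'': extracting a witness $r\in R^-(eL,P|_\mathcal{D})$ with $I_\mathcal{D}\models body(r)$ via Theorem~\ref{thm:eloop} requires $eL\subseteq I_\mathcal{D}$, not merely $eL\cap I_\mathcal{D}\neq\emptyset$.

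On the negative-literal subtlety you flag, the paper does not engage with it: in its $eL\subseteq I_\mathcal{D}$ case it simply asserts that a rule $r\in R^-(eL,P|_\mathcal{D})$ witnessing the loop formula under $I_\mathcal{D}$ also witnesses $LF(eL,P|_{HU(P)})$ under $I_\mathcal{D}\cup I'$, without discussing whether atoms of $I'$ might falsify $body^-(r)$. So your concern is legitimate relative to the printed argument, but the paper's route around the hard case is the contradiction via condition~(3), not the replacement-witness mechanism you sketch.
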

\begin{proof}
Let $I_\mathcal{D}$ be an answer set of $P|_{\mathcal{D}}$, from the definition of elementary-loop-admissible diminution, there exists the set $I'$ of ground atoms such that $I_\mathcal{D}\cup I'$ is a supported model of $P$ and $I_\mathcal{D}\cup I'$ satisfies loop formulas of elementary loops $eL$ of $P$ with $eL\subseteq I_\mathcal{D}$ or $eL\subseteq I'$.

To prove that $\mathcal{D}$ is an admissible diminution of $P|_{HU(P)}$, we need to show that $I_\mathcal{D} \cup I'$ is an answer set of $P$. By Theorem~\ref{thm:eloop}, it suffices to prove that $I_\mathcal{D} \cup I'$ satisfies $LF(eL, P)$ for every elementary loop $eL$ of $P|_{HU(P)}$. Consider an arbitrary elementary loop $eL$ of $P$. We analyze all possible cases:
\begin{enumerate}
    \item Case $eL \not\subseteq I_{\mathcal{D}} \cup I'$: by the definition of loop formula, $\neg\bigwedge_{a\in eL} a$ holds, hence $LF(eL, P|_{HU(P)})$ trivially true.
    \item Case $eL \subseteq I_\mathcal{D}$: Since $I_\mathcal{D}$ is an answer set of $P|_\mathcal{D}$, $I_\mathcal{D}$ satisfies some rules $r\in R^-(eL, P|_\mathcal{D})\subseteq R^-(eL, P|_{HU(P)})$, hence $LF(eL, P|_{HU(P)}$ satisfied by $ I_{\mathcal{D}} \cup I'$.
    \item Case $eL \subseteq I'$: by condition 1 of elementary-loop-admissible diminution, $I'$ satisfies $LF(eL, P|_{HU(P)})$, hence $I'\cup I_\mathcal{D}$ satisfies $LF(eL, P|_{HU(P)})$.
    \item Case $eL \cap I_{\mathcal{D}} \neq \emptyset \wedge eL \cap I_{HU(P)} \neq \emptyset \wedge el\subseteq I_\mathcal{D}\cup I|_{HU(P)}$: suppose such $eL$ exists. Since $I_\mathcal{D}$ is an answer set of $P|_\mathcal{D}$,  by Theorem~\ref{thm:eloop}, there must exist a loop $eL' \subseteq eL \cap I_\mathcal{D} \subseteq  eL$ However, this contradicts condition 3 of the definition~\ref{def:el ad diminution}.
\end{enumerate}
\end{proof}
Because elementary loops form a subset of all loops, any loop-admissible diminution is automatically elementary-loop-admissible. The converse does not necessarily hold, so the elementary notion is strictly weaker.

\section{Implementation and Evaluation}

\begin{table*}[!t]
  \centering
  \small
  \setlength{\tabcolsep}{1.6mm}
  \begin{tabular*}{0.9\textwidth}{@{\extracolsep{\fill}} l c r r r r r r}
    \toprule
    \textbf{Domain} & \textbf{$\mathcal{D}$}
      & \makecell[c]{Grounding (s)}
      & \makecell[c]{Final Size (MB)}
      & \makecell[c]{Solving (s)}
      & \makecell[c]{Timeout (\%)}
      & \makecell[c]{$\Delta$Size (MB/step)}
      & \makecell[c]{Avg. Steps} \\
    \midrule
\texttt{Clingo} \\
\multirow{2}{*}{VH}
  & $Heu$            & 13.60   & 40.16  & 2.23  & 4.91  & 2.03  & 17.17 \\
  & $\mathrm{HU}$    & 144.42 & 463.68 & 7.23 & 61.65 & 50.86 & 11.56 \\
\addlinespace
\multirow{2}{*}{AWS}
  & $Heu$          & 2.26 &   5.82 & 4.07 & 5.00  & 0.26  & 19.77 \\
  & $\mathrm{HU}$  & 56.89   & 538.10 & 19.02 & 40.80 & 25.01 & 20.32 \\
\addlinespace
\multirow{2}{*}{GW}
  & $Heu$           & 41.89  & 108.11 & 1.20 & 0.00 & 2.10  & 47.74, \\
  & $\mathrm{HU}$   & 182.61   & 187.19 & 4.83 & 11.00 & 3.28 & 47.15 \\
\addlinespace
\multirow{2}{*}{HC}
  & $Heu$         &  1.09   &   7.42 & 0.69  & 0.00  & ---   & ---   \\
  & $\mathrm{HU}$ &  69.19   & 759.26 & 20.45 & 0.00  & ---   & ---  \\
\addlinespace
\multirow{2}{*}{SM}
  & $Heu$           & 0.19   &   1.76 & 0.01  & 0.00  & ---   & ---   \\
  & $\mathrm{HU}$   &  35.83   & 402.91 & 9.54  & 0.00  & ---   & ---   \\
\addlinespace
\midrule 
\texttt{Dlv2} \\
\multirow{2}{*}{VH}
  & $Heu$ &  32.64 &  99.16  & 11.77 & 14.90 &  6.78  & 16.85 \\
  & $\mathrm{HU}$  &  41.29 & 524.69  & 1.73 & 96.67 & 298.16 &  3.84 \\
\addlinespace
\multirow{2}{*}{GW}
  & $Heu$          & 106.06 &52.01 & 48.17 & 29.00 & 0.88 & 47.74\\
  & $\mathrm{HU}$  &110.31 &125.05 &107.14 &84.00  & 5.00 & 28.64 \\
\addlinespace
\multirow{2}{*}{HC}
  & $Heu$          &  0.31 &   6.32 & 0.01  &  0.00 & ---   & ---   \\
  & $\mathrm{HU}$  &  30.15 & 667.10 & 67.04 &  0.00 & ---   & ---   \\
\addlinespace
\multirow{2}{*}{SM}
  & $Heu$           &  0.28 &   1.47 & 0.14  &  0.00 & ---   & ---   \\
  & $\mathrm{HU}$   &  0.515&  38.42 & 0.865 & 50.00 & ---   & ---   \\
\bottomrule
  \end{tabular*}
  \caption{Benchmark results for \texttt{Clingo} and \texttt{DLV2}.
  Domains: VH = \textbf{VirtualHome}, AWS = \textbf{AutomatedWarehouse}, GW = \textbf{2DGridWorld}, HC = \textbf{HamiltonianCircuit},
  SM = \textbf{StableMarriage}.  $\mathcal{D}$: \emph{Heu} denotes the
  heuristic diminution, \emph{HU} the full Herbrand universe. The Columns
  report mean grounding time, final ground size, solving time, timeout
  rate, average per-step size growth, and average number of steps
  (--- indicates not applicable), respectively.}
  \label{table:result}
\end{table*}

We present the practical implementation of \emph{diminution} and evaluate its impact on solving efficiency.
We describe our heuristic for selecting a diminution, show how it is enforced in standard grounders via domain predicates, and report experiments that quantify the resulting speed-ups.

\subsection{Implementation}
We begin by showing how diminution can be simulated with domain predicates in a standard \emph{bottom-up} grounding workflow, Given a program $P$, we build its predicate–rule dependency graph, compute the graph’s strongly connected components (SCCs), and order these components topologically. The resulting bottom-up grounding workflow is executed by the procedure \textsc{Grounding}($P$) presented  in~\cite{gebser2022answer} This classic algorithm is reproduced in full in Appendix~A.2..

\begin{definition}[$\mathcal{D}$-Guarded Program]
Let $P$ be a program and let $\mathcal{D}\subseteq HU(P)$. 
We construct $P^{[\mathcal{D}]}$ from $P$ by 
\begin{enumerate}
    \item Adding atoms in form of $\texttt{dom(X)}$ in $body^+(r)$ for some $r\in P$;
    \item Adding rules $r$ such that $pred(head(r)) = \{\texttt{dom/1}\}$.
\end{enumerate}
where $\texttt{dom/1}$ is a domain predicate and may only be instantiated with constants from $\mathcal{D}$. 

Let $C_1 \prec C_2 \prec\cdots\prec C_n$ be a topological ordering of the SCCs of the $G_{pr}$ of $P^{[\mathcal{D}]}$. Then $P^{[\mathcal{D}]}$ is called a \emph{$\mathcal{D}$-guarded program} if the following conditions hold for every component SCCs orderings of predicate–rule dependency graph of $P^{[\mathcal{D}]}$:
\begin{enumerate}
\item No $C_i$ contains both $\texttt{dom/1}$ and $p \in pred(P)$. 

\item If $\texttt{dom/1} \in pred(C_i)$, then no $C_j \prec C_i$ contains a rule $r$ with $V(r) \not= \emptyset$ and $head(r)$ not contain $\texttt{dom/1}$.  

\item Let $t =\max\{i \mid \texttt{dom/1} \in pred(C_i)\}$.
  Then for every rule $r\in C_j, j\ge i$ such that who contains $p\in \{p' \mid p' \in pred(C_t), p' \not= \texttt{dom/1},i>t\}$, there exist a $\texttt{dom(X)} \in body^+(r)$ for every variable $X$ in $p(\vec t) \in atom(r)$.
\end{enumerate}
\end{definition}
Intuitively, the above conditions guarantee that for every atom $a$ in the original program $P$ that contains a variable (w.l.o.g.\ call it $X$):
\begin{enumerate}
    \item If $pred(a)$ is grounded before \texttt{dom/1}, a literal $\texttt{dom}(X)$ is introduced, forcing $X$ to be instantiated only with constants in $\mathcal{D}$.
    \item For predicates grounded after \texttt{dom/1}, the grounding of their variables must respect the constant range already fixed by the instantiation of \texttt{dom/1}.
\end{enumerate}
\begin{theorem}
    Given a program $P$ and its $\mathcal{D}$-guarded program $P^\mathcal{D}$, let
    \[ \mathcal{F}_{dom}=\{f\in P \mid pred(f) = \texttt{dom/1}\}.
    \]
    Then $AS(\texttt{Grounding}(P^\mathcal{D})\setminus \mathcal{F}_{dom})=AS(P|_\mathcal{D})$.
\end{theorem}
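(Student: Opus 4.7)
The plan is to show that $\texttt{Grounding}(P^{\mathcal{D}})$ decomposes as $\mathcal{F}_{dom}\cup G$, where $G$ is a set of ground rules whose non-$\texttt{dom}$ content matches $P|_{\mathcal{D}}$ rule-for-rule, and then to use a splitting-set argument on $\mathcal{F}_{dom}$ to lift this syntactic correspondence to an equality of answer-set families. The overall strategy is therefore an induction over the topological order of the SCCs of the predicate--rule graph of $P^{\mathcal{D}}$, combined with an application of the splitting theorem at the very end.

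First I would trace the bottom-up procedure through the SCCs one at a time, using the three defining conditions of a $\mathcal{D}$-guarded program. By condition~1 the SCC containing $\texttt{dom/1}$ is disjoint from every SCC containing a predicate of $P$, and by condition~2 every SCC preceding it consists only of ground rules or of rules whose head is $\texttt{dom/1}$; so once $\texttt{Grounding}$ has finished the $\texttt{dom/1}$ SCC, the derived $\texttt{dom}$-atoms are exactly $\{\texttt{dom}(c)\mid c\in\mathcal{D}\}=\mathcal{F}_{dom}$. For any SCC processed afterward that contains a lifted rule $r$ of $P$, condition~3 supplies a guard $\texttt{dom}(X)$ in $\mathit{body}^+(r)$ for every variable $X$ of $r$; since only the atoms of $\mathcal{F}_{dom}$ are available to match these guards, the grounder instantiates $r$ with precisely the substitutions $\sigma:V(r)\to\mathcal{D}$, producing each $r\sigma$ augmented by the body literals $\texttt{dom}(\sigma(X))$. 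Collecting these instances over all $r\in P$ gives the set $G$, which -- modulo those $\texttt{dom}$ body literals -- is literally $P|_{\mathcal{D}}$.

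Next I would lift this correspondence to answer sets. Because $\texttt{dom/1}$ never occurs in the head of any rule of $G$, the set $\mathcal{F}_{dom}$ is a splitting set of $\mathcal{F}_{dom}\cup G$ whose bottom has the single answer set $\mathcal{F}_{dom}$ itself. The splitting theorem then delivers a bijection between $AS(\mathcal{F}_{dom}\cup G)$ and the answer sets of the partial evaluation of $G$ obtained by deleting every body literal $\texttt{dom}(c)$ with $c\in\mathcal{D}$ (each such literal is true in the bottom). The partial evaluation yields exactly $P|_{\mathcal{D}}$, and it also coincides with $\texttt{Grounding}(P^{\mathcal{D}})\setminus\mathcal{F}_{dom}$ once the now-redundant $\texttt{dom}$ literals have been simplified away. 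Since $\texttt{dom/1}$ is absent from $\mathit{pred}(P)$, projecting the bijection onto $P$-atoms gives the claimed equality $AS(\texttt{Grounding}(P^{\mathcal{D}})\setminus\mathcal{F}_{dom})=AS(P|_{\mathcal{D}})$.

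The main obstacle I anticipate is the precise handling of the residual $\texttt{dom}(c)$ literals sitting in rule bodies after instantiation: if $\texttt{Grounding}(P^{\mathcal{D}})\setminus\mathcal{F}_{dom}$ is read as a naive set-difference, those literals become permanently unsatisfied and block every rule they occur in, collapsing the answer-set family. The clean fixes are either (i) to appeal to the fact-simplification phase routinely included in the cited grounding algorithm, which rewrites $\ell\gets\texttt{dom}(c)\wedge\beta$ to $\ell\gets\beta$ as soon as $\texttt{dom}(c)$ is recognized as a fact, so that $G$ never contains a $\texttt{dom}$ literal to begin with, or (ii) to prove a small strong-equivalence lemma stating that deleting a positive body atom jointly with its unique supporting fact preserves answer sets whenever that atom's predicate is absent from every head of the remaining program. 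Once this step is settled, the remaining bookkeeping -- that $\mathcal{F}_{dom}$ is generated exactly, that no variable of $P$ escapes a guard under condition~3, and that no rule of $P$ is silently omitted -- is a routine induction over the SCC ordering.
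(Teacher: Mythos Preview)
Your proposal is correct and tracks the paper's argument closely: both proceed by an SCC-by-SCC trace of the bottom-up grounder, using condition~1--2 to establish that all and only the atoms $\texttt{dom}(c)$ with $c\in\mathcal D$ enter the derived-atom sets before any non-\texttt{dom} rule with variables is reached, and condition~3 to conclude that subsequent instantiation is confined to substitutions into $\mathcal D$.

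The one difference is that the paper never invokes the splitting theorem. It relies directly on what you list as fix~(i): the grounding procedure's fact-simplification step (which deletes from $body^+(r')$ any atom already in $A_\top$) removes every $\texttt{dom}(c)$ literal from rule bodies at instantiation time, so the output already satisfies $\textsc{Grounding}(P^{[\mathcal D]})\setminus\mathcal F_{dom}=\textsc{Restrict-Grounding}(P,\mathcal D)$ as a syntactic equality of ground programs, and the latter has answer sets $AS(P|_{\mathcal D})$ by construction. Your splitting-set route is sound but is a detour around a simplification that the cited algorithm performs automatically; once you commit to option~(i), the partial-evaluation and projection steps in your second paragraph become unnecessary.
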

This conclusion can be derived step by step using $\textsc{Grounding}($P$)$. We postpone the detailed proof to Appendix B.1.

The definition and theorem show that inserting a domain predicate in the prescribed way simulate grounding using any given $\mathcal{D}$. This makes diminution usable with any ASP Solver. 

\subsection{Benchmarks}
Our experiments comprise two parts. The first part considers three optimization problems: \textbf{VirtualHome} (VH, high-level household robotic planning)~\cite{puig2018virtualhome}, \textbf{AutomatedWarehouse} (AWS, multiple robots picking up and delivering product while avoiding collisions with one another)~\cite{gebser2018experimenting}, and \textbf{2DGridworld} (GW, single-robot path finding with static obstacles)~\cite{mcdermott20001998}. We follow the convention of using incremental grounding~\cite{gebser2019multi} to solve these optimization problems.
For the \textit{second} part, we study two classic satisfiability problems from past ASP compititions~\cite{alviano2013fourth}: \textbf{HamiltonianCircuit} (HC) and \textbf{StableMarriage} (SM).

We design the following heuristics, which can be encoded via the domain-predicate injection method introduced earlier and thus serve as concrete diminutions in the grounding process:
\begin{enumerate}
    \item For \textbf{VH}, a skeleton plan (i.e., a course of intermediate actions or states) is generated by LLMs, as done by some contemporary work~\cite{lin2024clmaspcouplinglargelanguage}, We then propagate from the given skeleton plan and facts to build the set of relevant constants, treating it as the domain’s diminution.
    \item For \textbf{AWS} and \textbf{GW}, we restrict locations to grid cells whose rows or columns align with the boundaries of objects of interest (e.g., obstacles, shelves, and other relevant items).
    \item For \textbf{HC}, the selection of the next node is restricted within a neighborhood of the current node.
    \item For \textbf{SM}, each men only proposes to the women associated with indices in a predefined range.
\end{enumerate}
Note that adding these heuristics may, in general, render the program unsatisfiable. for each domain, we  generate the test instances on purpose so that at least one solution remains under the given heuristic. Noting that above heuristics may causes the solutions without the parts we require, we therefore ensure in our encoding that the heuristic‐derived diminution is $\mathcal{P}$-preserved. In practice, $\mathcal{P}$ comprises the predicates of interest in a problem’s solutions (e.g.\ $\{\texttt{occurs/2}\}$ for VH, $\{\texttt{hc/2}\}$ for HC, $\{\texttt{match/2}\}$ for SM).

After introducing the above ad-hoc heuristics, we present a more general methodology to guide the construct of such heuristics.

Given a constraint‐satisfaction problem, let a feasible solution be a mapping
$f : \Phi \!\to\! \Psi$ from variables~($\Phi$) to values~($\Psi$).
Assume an oracle $\hat f$ that proposes a (possibly infeasible) guess.
We distinguish three oracle modes:
\begin{enumerate}
  \item $\hat f_1 : \hat\Phi \to \Psi$ with $\hat\Phi \subseteq \Phi$
        \quad—provides a \emph{partial assignment}.
  \item $\hat f_2 : \Phi \to \hat\Psi$ with $\hat\Psi \subseteq \Psi$
        \quad—\;restricts the \emph{value set}.
  \item $\hat f_3 : \Phi \to \bigcup_{\phi \in \Phi}\mathcal{N}\!\bigl(f'(\phi)\bigr)$,
        where $f' : \Phi \to \Psi$ is a full guess and
        $\mathcal{N}(\psi)$ denotes a bounded neighborhood of~$\psi$
        \quad—limits the search to a predefined \emph{neighborhood}.
\end{enumerate}
Each oracle mode yields a candidate constant set that can serve as a diminution~$\mathcal{D}$.

\subsection{Experimental Results}
All experiments were run on a Windows PC with an AMD Ryzen 5 9700X  
(6 cores, 3.8 GHz) and 47.2 GB DDR5 RAM.  
Clingo (Python API 5.8.0) was configured to return a single answer set, while \texttt{DLV 2.1.2‑win64} was invoked with
\texttt{--stats=2}, which enumerates all answer sets in order
to report aggregate statistics.  

In our evaluation (Table~\ref{table:result}) we measure, for each domain and its diminution variant, the \emph{Grounding} and \emph{Solving} times~(s), the \emph{Final Size} of the ground program (MB), and the \emph{Timeout} rate~(\%).  
For domains run under \texttt{incmode} we additionally report the average number of executed steps (\emph{Avg. Steps}) and the average per-step size growth $\Delta$\emph{Size}, which gauges the extra cost of each incremental grounding round.

In \texttt{incmode} terminate the process once the current step takes more then 30 seconds; one-shot runs use a limit of 300 seconds. 
If a timeout occurs, the time and ground-file size collected up to the last completed step are still included in all averages.  

With Clingo, diminution cuts ground size by one to two orders of magnitude in the Automated Warehouse, HC, and SM domains, drops grounding time from more than a minute to a few seconds, and reduces the VH timeout rate from 62\% to under 5\%.  
DLV2 shows a similar pattern: on VH the timeout rate falls from 97\% to 15\%, and ground size shrinks five-fold.  
Across all benchmarks, using diminution ground programs translate into lower grounding and solving times as well as a significantly reduced ground file size, confirming that restricting the constant set effectively curbs both time and space overhead.

\section{Conclusion}
Diminution restricts the constant set before grounding, reducing ground programs while ensuring that every answer set still extends to an answer set of the original program. The transformation works entirely at the grounding stage and integrates with existing ASP solvers simply by adding domain predicates in the prescribed way. Looking ahead, we plan to add a neural network module that proposes promising constant subsets and iteratively refines them through solver feedback.

\section*{Acknowledgements}

We would like to express our sincere gratitude to Fangzhen Lin, Jiahuai You, and Yisong Wang for their constructive comments in the early stage of this work. We also appreciate Chenglin Wang for his helpful assistance in advancing the progress of this paper.

\clearpage
\appendix
\section*{Appendix}
\section{Algorithms mentioned in the paper}
This section presents concise pseudocode for algorithms referenced in the main paper: the Domain Predicate Lifting routine (Algorithm~\ref{alg:domlift}) and the bottom-up grounding procedure (Algorithm \ref{alg:dep-ground}).

\subsection{Domain Predicate Lifting Algorithm}

\begin{algorithm}[H]
\caption{\textsc{DomLift}$(P)$}
\label{alg:domlift}  
\begin{algorithmic}[1]
\REQUIRE  program $P$
\ENSURE   rewritten program $P^{\uparrow}$ with domain predicates

\STATE $P^{\uparrow} \gets P$
\FORALL{constant $c \in C(P)$}
  \STATE choose fresh variable $v_c$ and\ predicate $p_c$
  \STATE replace every occurrence of $c$ in $P^{\uparrow}$ by $v_c$
  \STATE add $p_c(c)$ as fact to $P^{\uparrow}$
  \STATE add $p_c(v_c)$ to each rule' body where $c$ was replaced
\ENDFOR
\RETURN $P^{\uparrow}$
\end{algorithmic}
\end{algorithm}
Algorithm \ref{alg:dep-ground} is mentioned in Theorem 4 of the main text. 
In brief, this procedure transforms $P$ into $P^\uparrow$, where for each constant $c\in C(P)$, it introduces a fresh variable $v_c$ and a domain predicate $p_c$, rewriting each occurrence of $c$ by $v_c$ guarded with $p_c(v_c)$, and adding the fact $p_c(c)$.  As a result, for any $\mathcal{D}\subseteq HU(P)$, each answer set of $P^\uparrow|_{\mathcal{D}}$ is obtained by extending some $I\in AS(P|_{\mathcal{D}})$ with the facts $\{p_c(c)\mid c\in C(P)\}$. Because the transformation does not introduce new constants, $HU(P^\uparrow)=HU(P)$.

\subsection{Basic Grounding Algorithm}
we present the bottom-up grounding algorithm that underlies modern ASP grounders and supports Definition 8 and Theorem 8 of the main text. The procedure given in Algorithm \ref{alg:dep-ground} follows the method of~\cite{gebser2022answer}, with only minor adjustments to fit our notation and presentation.

We first need to recast the grounding process from the perspective of substitution. For a function-free safe program $P$, A (ground) substitution is a mapping from variables to constant. Given two sets B and D of atoms, a \emph{substitution} $\theta$ is a \emph{match} of $B$ in $D$, if $B\theta \subseteq D$. A \emph{good match} is a $\subseteq-minimal$ one. Given a set $B$ of atoms(with variables) and a set $D$ of ground atoms, we define the set $\Theta(B, D)$ of good matches for all elements of $B$ in $D$.

\begin{algorithm}[H]
\caption{\textsc{GROUNDING}$(P)$}
\label{alg:dep-ground}
\begin{algorithmic}[1]
\REQUIRE Program $P$ with variables
\ENSURE  Ground program $P'$ 
\STATE Construct the predicate–rule dependency graph $G_{pr}$ from $P$
\STATE Let the SCCs of $G_{pr}$ be $C_1 \prec C_2 \prec \cdots \prec C_n$ in topological order
\STATE $P'\leftarrow\emptyset$; \quad $A_{\top}\leftarrow\emptyset$; \quad $A_{\neg \bot}\leftarrow\emptyset$

\FOR{$i=1$ to $n$}
  \IF{every element of $C_i$ is a ground fact $f$}
    \STATE $P'\leftarrow P'\cup\{f\}$;\\ $A_{\top}\leftarrow A_{\top}\cup\{f\}$;\\ $A_{\neg \bot}\leftarrow A_{\neg \bot}\cup\{f\}$
  \ELSIF{$C_i$ contains only predicate symbols}
    \STATE \textbf{continue}
  \ELSE
    \STATE $B\leftarrow \{body^+(r) \mid r \in C_i\}$; \quad \\
           $\Theta \leftarrow \Theta(B, A_{\neg \bot})$ 
       \FOR{$\theta \in \Theta$, $r\in C_i$}
       \STATE $r' \leftarrow r\theta$
       \IF{$body^+(r') \notin A_{\neg\bot}$ or \\ $body^-(r') \cap A_{\top} \not= \emptyset$ or \\ $head(r') \in A_{\top}$}
        \STATE \textbf{Continue}
       \ELSE
       \STATE $body^+(r')\leftarrow body^+(r') \setminus A_{\top}$ 
       ;\quad \\ $body^-(r')\leftarrow body^-(r') \cap A_{\neg\bot}$
       ;\quad \\ $P'\leftarrow P'\cup \{r'\}$ ;\quad $A_{\neg\bot} \leftarrow A_{\neg\bot} \cup head(r')$
           \IF{$body(r') = \emptyset$}
           \STATE $A_{\top} \leftarrow A_{\top}\cup head(r')$
           \ENDIF
       \ENDIF
       \ENDFOR
  \ENDIF
\ENDFOR
\RETURN $P'$
\end{algorithmic}
\end{algorithm}

The grounding procedure operates as follows.
\begin{enumerate}
\item Lines\,1–2 compute a topological order of the SCCs of the rule–predicate graph of given program; fixing the order in which components are grounded;
\item Line\,3 initializes two sets that are updated throughout the loop: $A_{\top}$ stores atoms are already known to be true, $A_{\neg\bot}$ stores atoms that possible true (unknown).
\item Because the input program is safe, grounding considers substitutions only for the variables that occur in positive body literals.  These variable-containing literals are instantiated only by substitutions $\theta \in \Theta(B,A_{\neg\bot})$, that is, substitutions that ground each positive body literal to an atom in $A_{\neg\bot}$.
\item During the loop (lines\,14–19) a candidate rule is discarded if its body cannot be satisfied or its head is already in $A_{\top}$.  For every rule that survives this check, body literals that are already satisfied are removed before the rule is added to $P'$.
\end{enumerate}

Furthermore, we slightly modify \textsc{Grounding}$(P)$ in
Algorithm~\ref{alg:dep-ground} by replacing line~10,
$\Theta \leftarrow \Theta(B, A_{\neg\bot})$
with
$\Theta \leftarrow \Theta(B, A_{\neg\bot}, \mathcal{D})$,
where
$  \Theta(B, A_{\neg\bot}, \mathcal{D}) =
  \{\,X \mapsto c \mid c \in \mathcal{D},  X \mapsto c \text{ is a } \text{good match of } B
      \text{ in } A_{\neg\bot}\}.$
We refer to the resulting procedure as
\textsc{Restrict-Grounding}\((P,\mathcal{D})\).
Intuitively, this change restricts variable instantiation in $P$ to
constants drawn exclusively from $\mathcal{D}$.

Consequently, for any program $P$, we have
$AS(\textsc{Restrict-Grounding}(P,\mathcal{D})) = AS(P|_{\mathcal{D}}),. $

\section{Detailed Proofs}
Here we provide full proofs omitted from the main paper for conciseness.
\subsection{Proof of Theorem 8}
\setcounter{theorem}{7} 
\begin{theorem}
    Given a program $P$ and its $\mathcal{D}$-guarded program $P^\mathcal{D}$, let
    \[ \mathcal{F}_{dom}=\{f\in P \mid pred(f) = \texttt{dom/1}\}.
    \]
    Then $AS(\texttt{Grounding}(P^\mathcal{[D]})\setminus \mathcal{F}_{dom})=AS(P|_\mathcal{D})$.
\end{theorem}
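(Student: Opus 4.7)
The plan is to trace the execution of \textsc{Grounding} on $P^{[\mathcal{D}]}$ step by step, using the three conditions in the definition of a $\mathcal{D}$-guarded program, and show that what comes out after removing $\mathcal{F}_{dom}$ is syntactically identical to the ground program $\textsc{Restrict-Grounding}(P,\mathcal{D})$ defined in Appendix~A.2. Since that procedure is already noted to satisfy $AS(\textsc{Restrict-Grounding}(P,\mathcal{D})) = AS(P|_{\mathcal{D}})$, the theorem reduces to this syntactic identification.

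First, I would identify the stage at which $\texttt{dom/1}$ is processed. Condition~1 of the definition places $\texttt{dom/1}$ in its own SCC(s), disjoint from any $C_i$ containing original predicates. Condition~2 ensures that when those SCCs are processed, every predicate they might depend on has already been fully instantiated. Let $t$ be the last SCC index containing $\texttt{dom/1}$; at the end of iteration $t$ we have $\{\texttt{dom}(c)\mid c\in\mathcal{D}\}\subseteq A_{\top}\subseteq A_{\neg\bot}$, and conversely no $\texttt{dom}(c')$ with $c'\notin\mathcal{D}$ is present, because only the added ground facts and the added rules (whose grounding produces exactly these facts) contribute to $\texttt{dom/1}$.

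Next, I would analyze the SCCs $C_j$ with $j>t$. Condition~3 guarantees that every rule $r$ in such a $C_j$ that mentions a variable $X$ in a literal $p(\vec t)$ whose predicate was introduced in some earlier $C_i$ with $i>t$ also carries a guard literal $\texttt{dom}(X)$ in $body^+(r)$. Hence, when the good-match computation $\Theta(B,A_{\neg\bot})$ is invoked on line~10, the only substitutions $\theta$ that satisfy all $\texttt{dom}$-guards are those with $\theta(X)\in\mathcal{D}$ for every variable $X$. After substitution, the cleanup step $body^+(r')\leftarrow body^+(r')\setminus A_{\top}$ removes all $\texttt{dom}(c)$ literals, since they are in $A_{\top}$. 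Thus the rule emitted into $P'$ coincides exactly with the ground instance of $r\in P$ under a substitution $V(r)\to\mathcal{D}$. Collecting over all $C_j$ with $j>t$, together with the ground facts of the original program produced in earlier SCCs, shows $\texttt{Grounding}(P^{[\mathcal{D}]})\setminus\mathcal{F}_{dom}$ equals $\textsc{Restrict-Grounding}(P,\mathcal{D})$ as a set of rules. A short splitting argument, taking $\{\texttt{dom}(c)\mid c\in\mathcal{D}\}$ as the bottom, justifies that removing $\mathcal{F}_{dom}$ from the full ground program strips off exactly this bottom and leaves answer sets intact modulo those facts.

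I expect the main obstacle to be the verification that no spurious ground rule escapes the bottom-up procedure: in particular, a rule whose body references a predicate $p$ grounded \emph{before} $\texttt{dom/1}$ is not required by condition~3 to be guarded, and I must check that such rules cannot introduce constants outside $\mathcal{D}$ into the heads of later rules in a way that re-enters the $\texttt{dom}$-guarded fragment. This is exactly what condition~2 prevents, but writing the invariant cleanly, across the full topological order, and then matching it against $\Theta(B,A_{\neg\bot},\mathcal{D})$ of the restricted procedure, is the delicate bookkeeping step of the proof.
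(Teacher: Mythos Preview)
Your proposal is correct and follows essentially the same route as the paper's proof: both trace the bottom-up \textsc{Grounding} procedure through the SCC order, use conditions~1--2 to argue that all $\texttt{dom}(c)$ atoms land in $A_{\top}$ before any non-$\texttt{dom}$ variable rule is reached, use condition~3 to conclude that the good-match set $\Theta(B,A_{\neg\bot})$ coincides with $\Theta(B,A_{\neg\bot},\mathcal{D})$, and then observe that the $A_{\top}$-cleanup step strips the $\texttt{dom}$ guards from bodies, yielding $\textsc{Grounding}(P^{[\mathcal{D}]})\setminus\mathcal{F}_{dom}=\textsc{Restrict-Grounding}(P,\mathcal{D})$ as syntactic ground programs. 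Your extra splitting argument for the removal of $\mathcal{F}_{dom}$ is a harmless addition (once the $\texttt{dom}$ literals are gone from all bodies, the facts are isolated and the argument is trivial); the paper simply takes the syntactic equality of ground programs as sufficient.
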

\begin{proof}
We write $P^{[\mathcal D]} = P_{dom} \cup P_{res}$, where $P_{dom}$ is a sub-program of $P^{[\mathcal D]}$ such that, for every rule $r_{dom}\in P_{dom}$, $\operatorname{pred}(r_{dom}) = {\texttt{dom/1}}$.

By definition of a $\mathcal{D}$-guarded program,fix an arbitrary topological order of the SCCs in the predicate–rule dependency graph of
$P^{[\mathcal{D}]}$.  For each $C_i$ containing $\texttt{dom}/1 \in pred(C_i)$ we have:
\begin{itemize}
\item no $C_j \prec C_i$ contains a rule $r$ with
      $V(r)\neq\emptyset$ whose head predicate differs from $\texttt{dom}/1$; and
\item no $C_i$ simultaneously contains $\texttt{dom}/1$ and a predicate
      $p \in \operatorname{pred}(P)$.
\item in each immediate successor of $C_i$, every variable $X$ occurring in a rule $r$ is guarded by the literal $\texttt{dom}(X)$ in $body^{+}(r)$.
\end{itemize}
By definition, $\mathcal{D}$ is the unique instantiation domain for the predicate \texttt{dom}/1. 
Further, before the algorithm reaches any component $C_i$, \textsc{Grounding} has processed only ground rules or rules $r$ with $head(r) = \texttt{dom}/1$.)
Consequently, after $C_i$ is processed, the sets $A_{\top}$ and $A_{\neg\bot}$ contain the same ground atoms of the form \texttt{dom}$(\cdot)$.

Thus, when \textsc{Grounding}$(P)$ arrives at a component $C_k$ containing a rule with variables and $pred(head(r)) \neq \texttt{dom}/1$, every ground atom \texttt{dom}$(c)$ is already in $A_{\top}$ and will not be enlarged further.
For any component $C_k$, one can choose an order $C_i\prec\cdots\prec C_k$ so that grounding under this order never introduces a constant outside $\mathcal D$ into $A_{\neg\bot}$.
Hence $\Theta \gets \Theta(B,A_{\neg\bot})$ has the same effect as $\Theta \gets \Theta(B,A_{\neg\bot},\mathcal D)$.

Afterwards, grounding $P^{[\mathcal D]}$ turns every rule in $P_{dom}$ into a fact and removes every literal \texttt{dom}$(X)$ from the body of each rule in $P_{\mathrm{res}}$.
Consequently,
$\textsc{Grounding}(P^{[\mathcal D]}) \setminus \mathcal F_{dom} =
\textsc{Restrict-Grounding}(P,\mathcal D)$,
so the two programs possess identical answer sets.)
\end{proof}

\section{Additional Experimental Results}
We report more fine‐grained results and analyses.

\subsection{Experimental Setups} 

Table \ref{tab:incmode-scenarios} lists every benchmark used under the
\textsc{incmode} setting.  We cover five families:

\begin{table}[H]
  \centering
  \small
  \setlength{\tabcolsep}{1.3mm}
  \begin{tabular}{@{}l l r@{}}
    \toprule
    \textbf{Env.} & \textbf{Configuration Values} & \textbf{\#Inst.} \\
    \midrule
    VH & --- & 643 \\ \midrule
    \multirow{4}{*}{AWS}
      & (8, 8, 3, 5, 3, 3) & 50 \\
      & (10, 10, 5, 10, 3, 3)  & 50 \\
      & (12, 12, 6, 30, 6, 6) & 50 \\
      & (15, 15, 7, 30, 5, 5) & 50 \\ \midrule
    \multirow{2}{*}{GW}
      & (50, 50, 2, 2, 8)    & 50 \\
      & (100, 100, 8, 2, 8)  & 50 \\ \midrule
    \multirow{4}{*}{HC}
      & 200 & 10 \\
      & 400 & 10 \\
      & 600 & 10 \\
      & 800 & 10 \\ \midrule
    \multirow{4}{*}{SM}
      & 30  & 10 \\
      & 60  & 10 \\
      & 90  & 10 \\
      & 120 & 10 \\
    \bottomrule
  \end{tabular}
  \caption{Benchmarks used in the \textsc{incmode} evaluation.  
           \textbf{AWS}: \textit{(width, height, product types, shelves, orders, max products/order)};  
           \textbf{GW}:  \textit{(width, height, obstacles, min size, max size)};  
           \textbf{HC}: single value = number of nodes;  
           \textbf{SM}: single value = number of people (men = women).}
  \label{tab:incmode-scenarios}
\end{table}

\subsection{Fine-Grained Analysis of Behavior in \texttt{incmode}}

Throughout the figures in this section, we label each configuration with two tags:
\texttt{clingo} or \texttt{DLV2} (the ASP solver) and
\texttt{Diminution} or $HU(P)$ (the grounding strategy).
Here \texttt{Diminution} means grounding is restricted to a selected constant subset~$\mathcal D\subseteq HU(P)$.
For example, \texttt{clingo}/\texttt{Diminution} denotes
running \texttt{clingo} on a $\mathcal D$-guarded (diminished) version of~$P$.

% 纵向排列、占单栏
\begin{figure}[H]
  \centering

  % --------- 1. VirtualHome ----------
  \begin{subfigure}[b]{\linewidth}
    \centering
    \includegraphics[height=5.5cm]{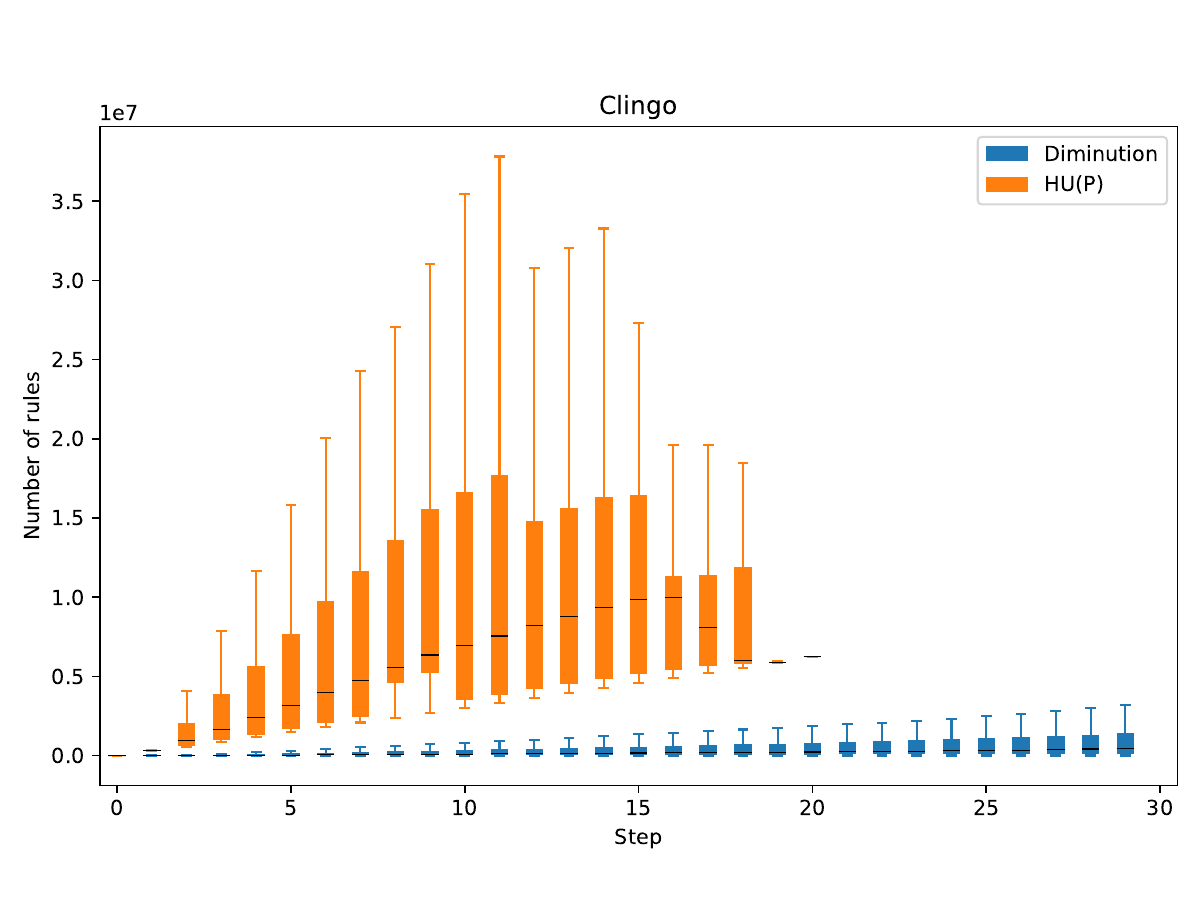}
    \caption{\textbf{VH}}
  \end{subfigure}

  % --------- 2. AWS ----------
  \begin{subfigure}[b]{\linewidth}
    \centering
    \includegraphics[height=5.5cm]{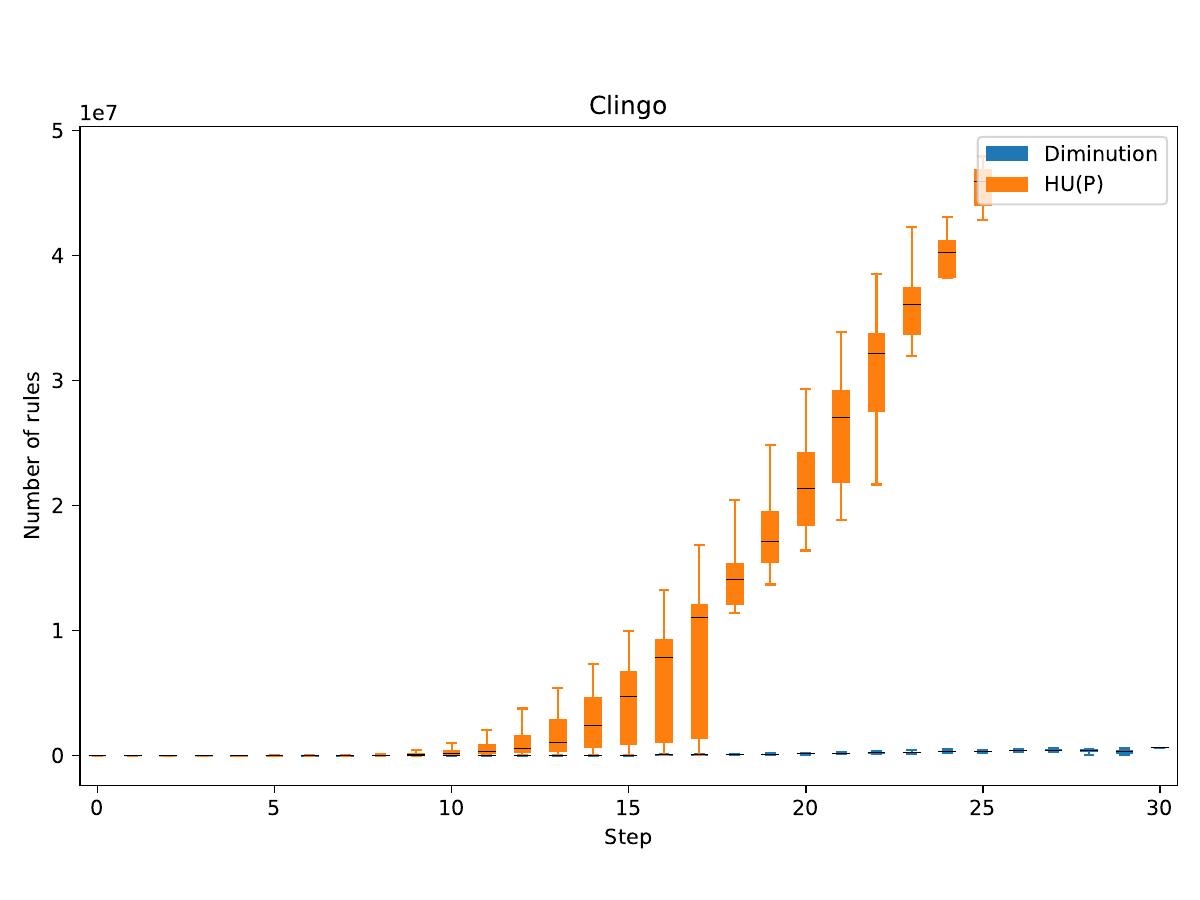}
    \caption{\textbf{AWS}}
  \end{subfigure}

  % --------- 3. GridWorld ----------
  \begin{subfigure}[b]{\linewidth}
    \centering
    \includegraphics[height=5.5cm]{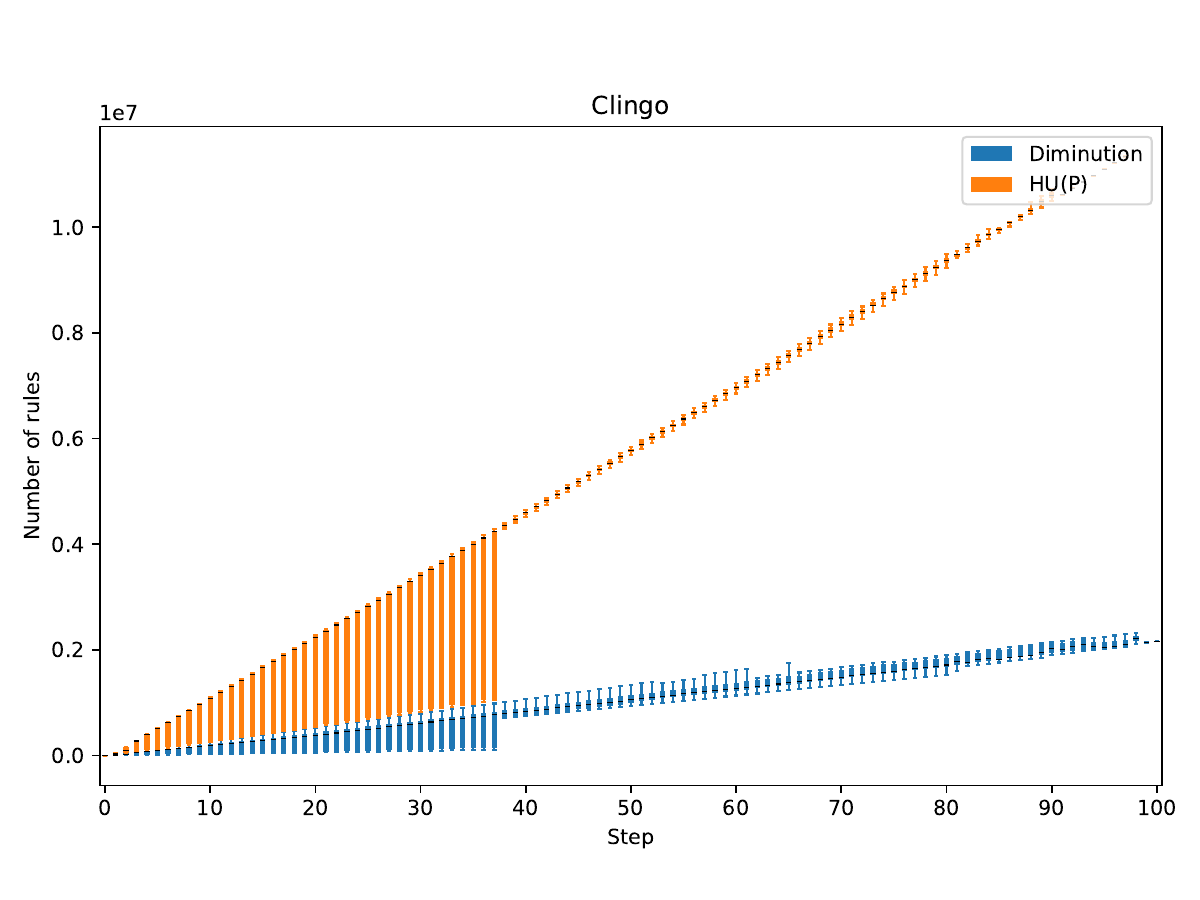}
    \caption{\textbf{GW}}
  \end{subfigure}

  \caption{Step-wise ground-rule counts produced by \texttt{incmode} solving for the three planning domains.}
  \label{fig:incmode_num_rules}
\end{figure}

\begin{figure}[H]
  \centering

  % --------- 1. VirtualHome ----------
  \begin{subfigure}[b]{\linewidth}
    \centering
    \includegraphics[height=5.5cm]{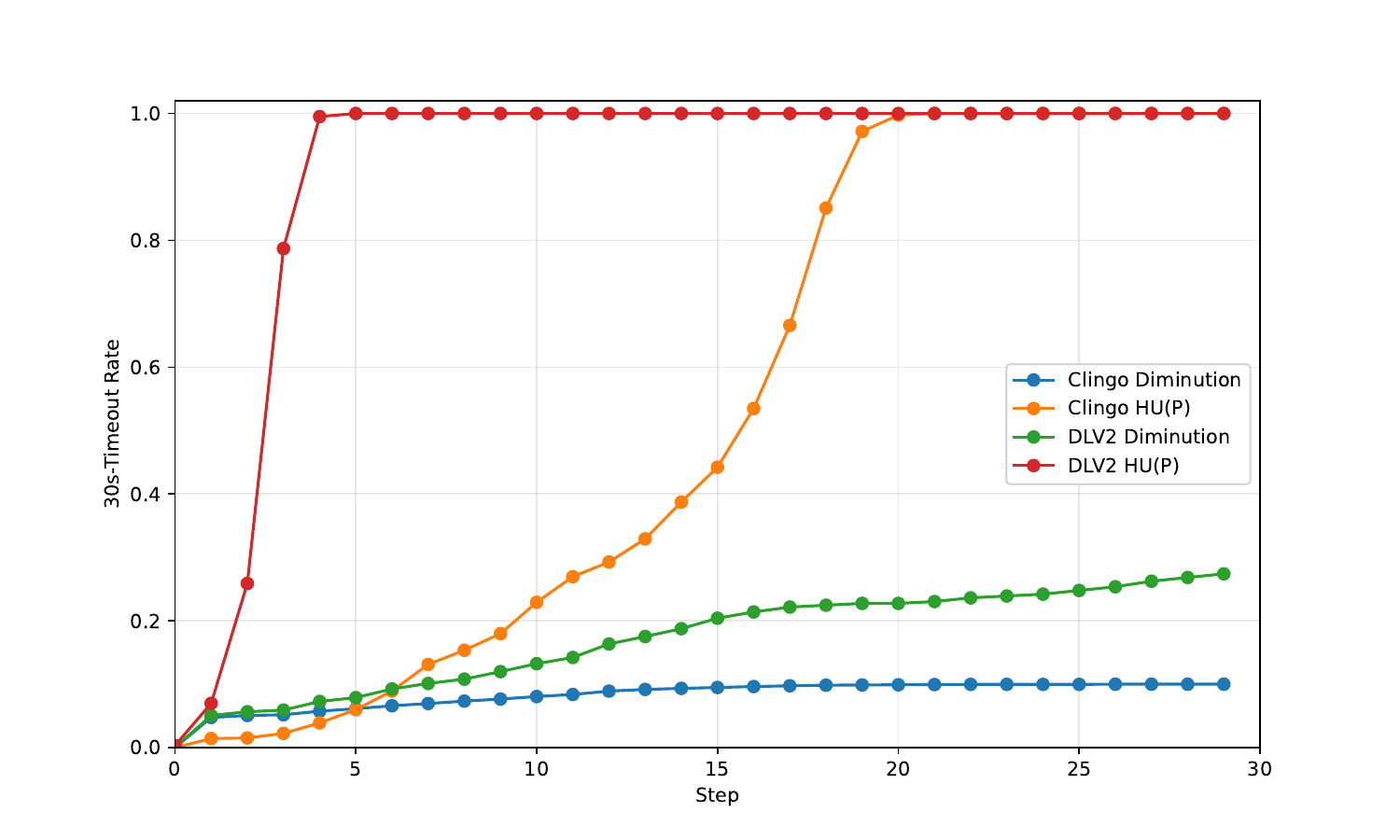}
    \caption{\textbf{VH}}
    \label{fig:vh_timeout}
  \end{subfigure}

  % --------- 2. AWS ----------
  \begin{subfigure}[b]{\linewidth}
    \centering
    \includegraphics[height=5.5cm]{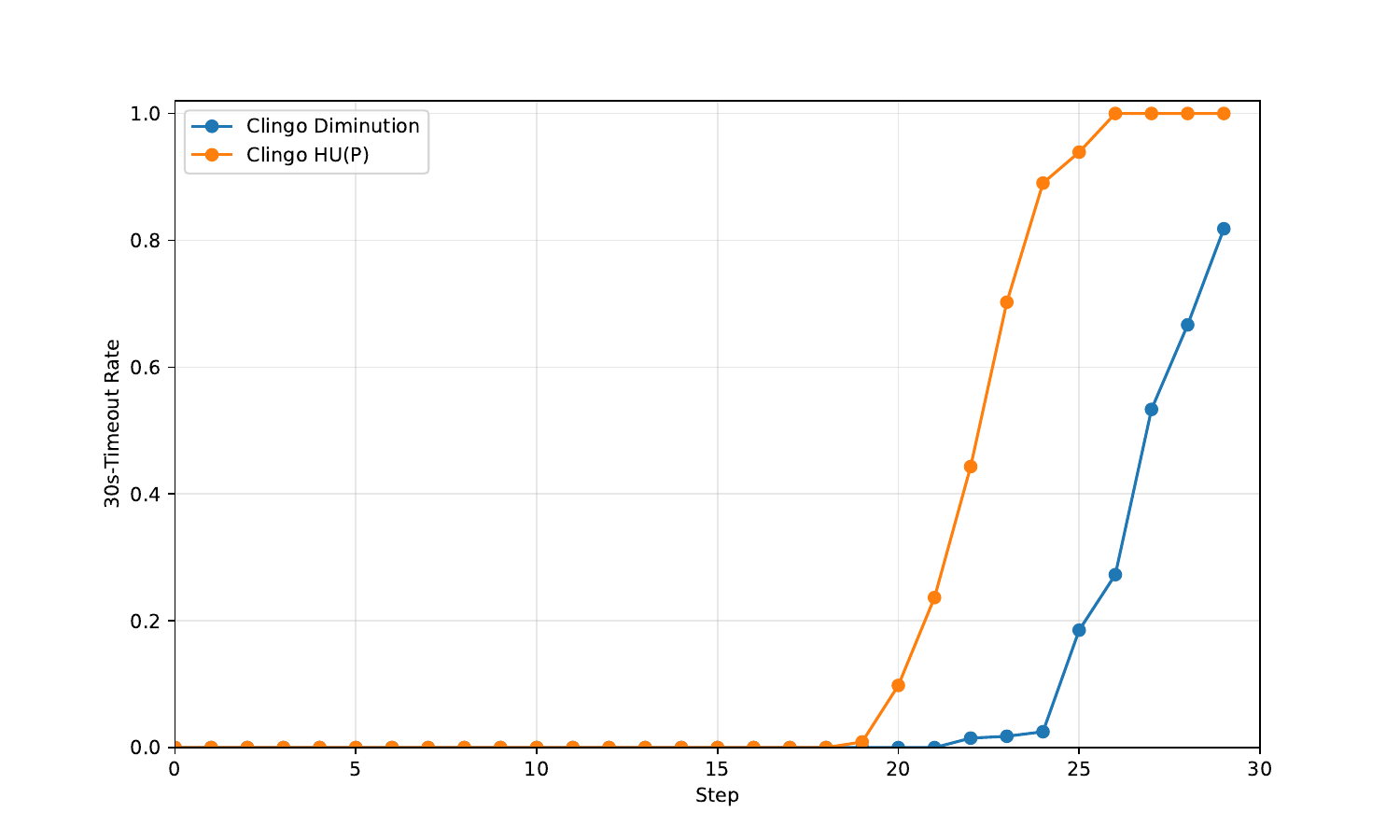}
    \caption{\textbf{AWS}}
    \label{fig:aws_timeout}
  \end{subfigure}

  % --------- 3. GridWorld ----------
  \begin{subfigure}[b]{\linewidth}
    \centering
    \includegraphics[height=5.5cm]{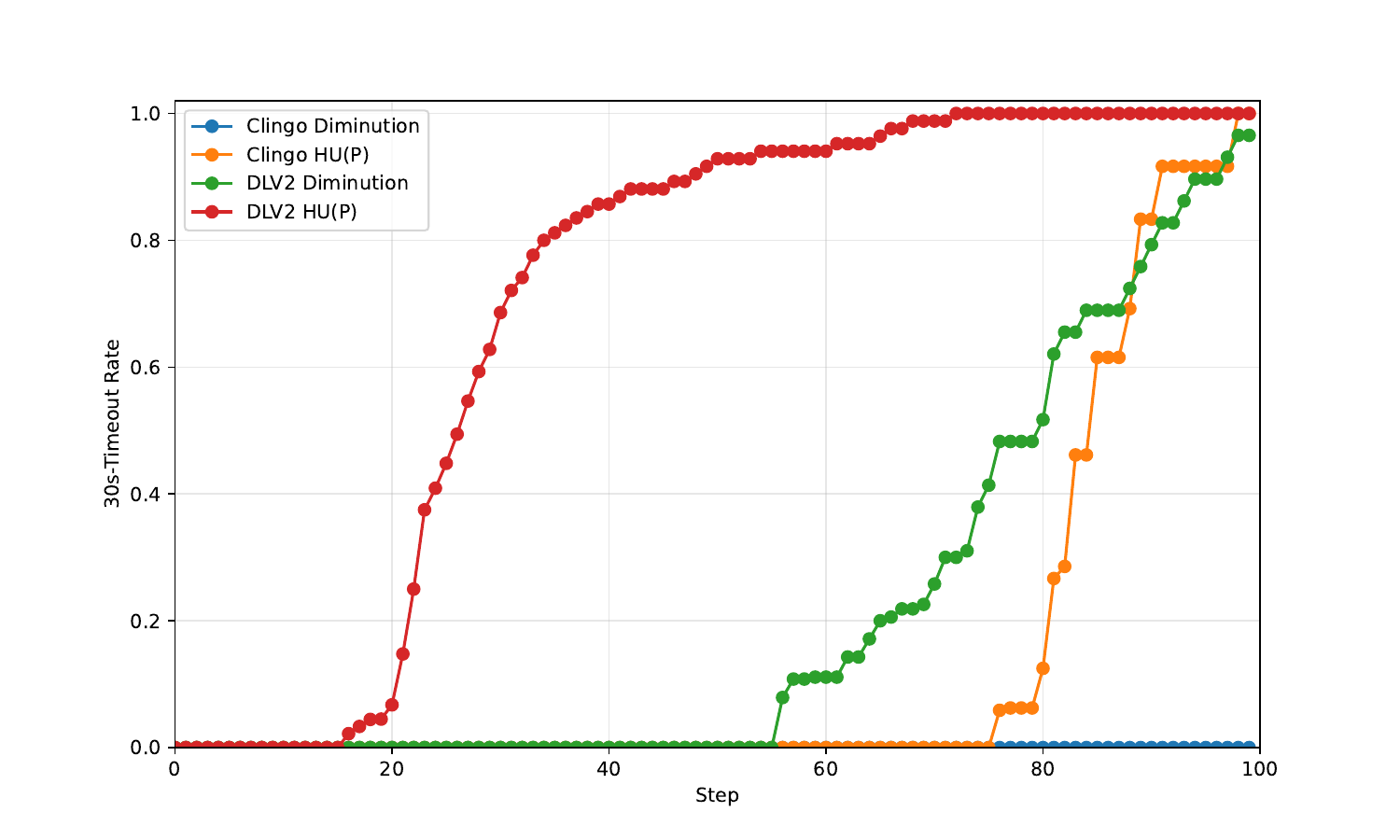}
    \caption{\textbf{GW}}
    \label{fig:gw_timeout}
  \end{subfigure}

  \caption{Step-wise $30\,\text{s}$ timeout rates of \texttt{incmode} solving for three planning domains.}
  \label{fig:incmode_timeout_rates_single}
\end{figure}
% ===============================

\clearpage

\begin{figure*}
  \centering

  % --------- 1. VirtualHome ----------
  \begin{subfigure}[b]{\linewidth}
    \centering
    \includegraphics[height=5.5cm]{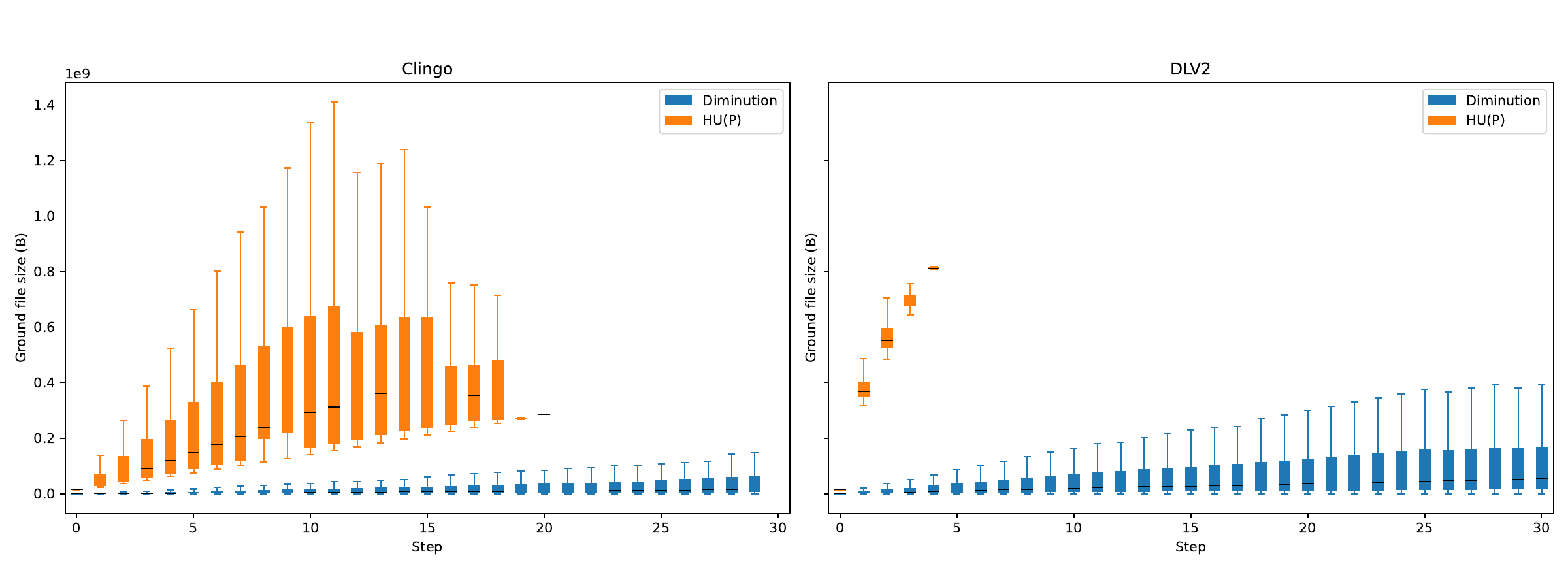}
    \caption{\textbf{VH}}
  \end{subfigure}

  % --------- 2. AWS ----------
  \begin{subfigure}[b]{\linewidth}
    \centering
    \includegraphics[height=5.5cm]{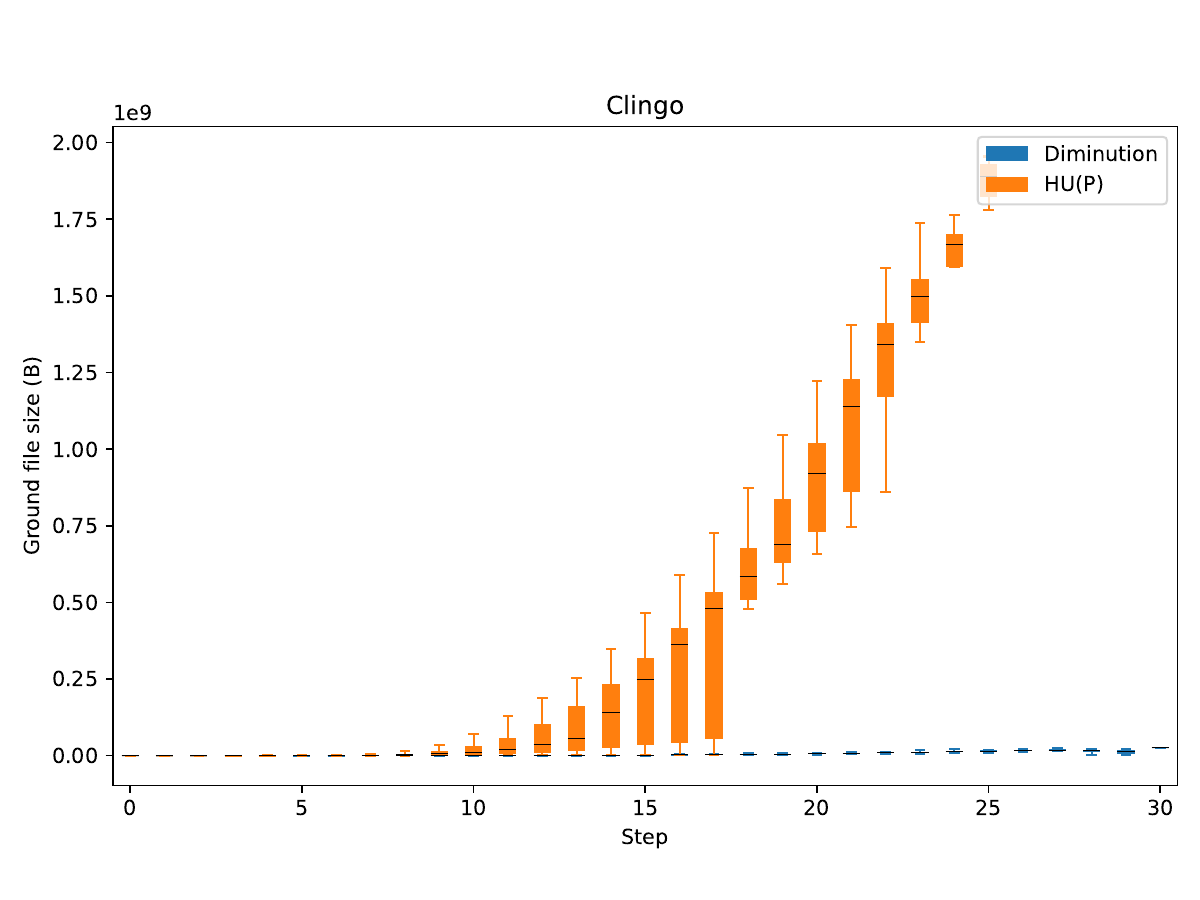}
    \caption{\textbf{AWS}}
  \end{subfigure}

  % --------- 3. GridWorld ----------
  \begin{subfigure}[b]{\linewidth}
    \centering
    \includegraphics[height=5.5cm]{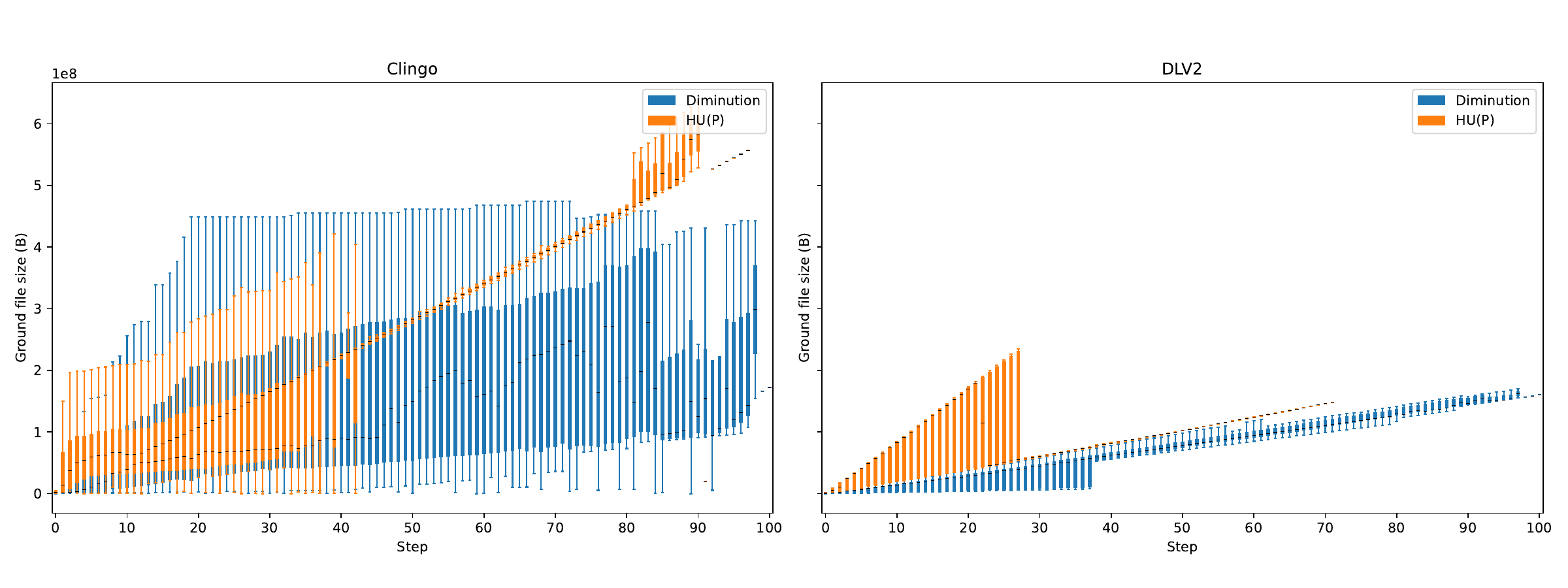}
    \caption{\textbf{GW}}
  \end{subfigure}

  \caption{Step-wise \texttt{aspif} ground-file size produced by \texttt{incmode} solving for the three planning domains.}
  \label{fig:incmode_file_size_rules}
\end{figure*}

\begin{figure*}
  \centering

  % --------- 1. VirtualHome ----------
  \begin{subfigure}[b]{\linewidth}
    \centering
    \includegraphics[height=5.5cm]{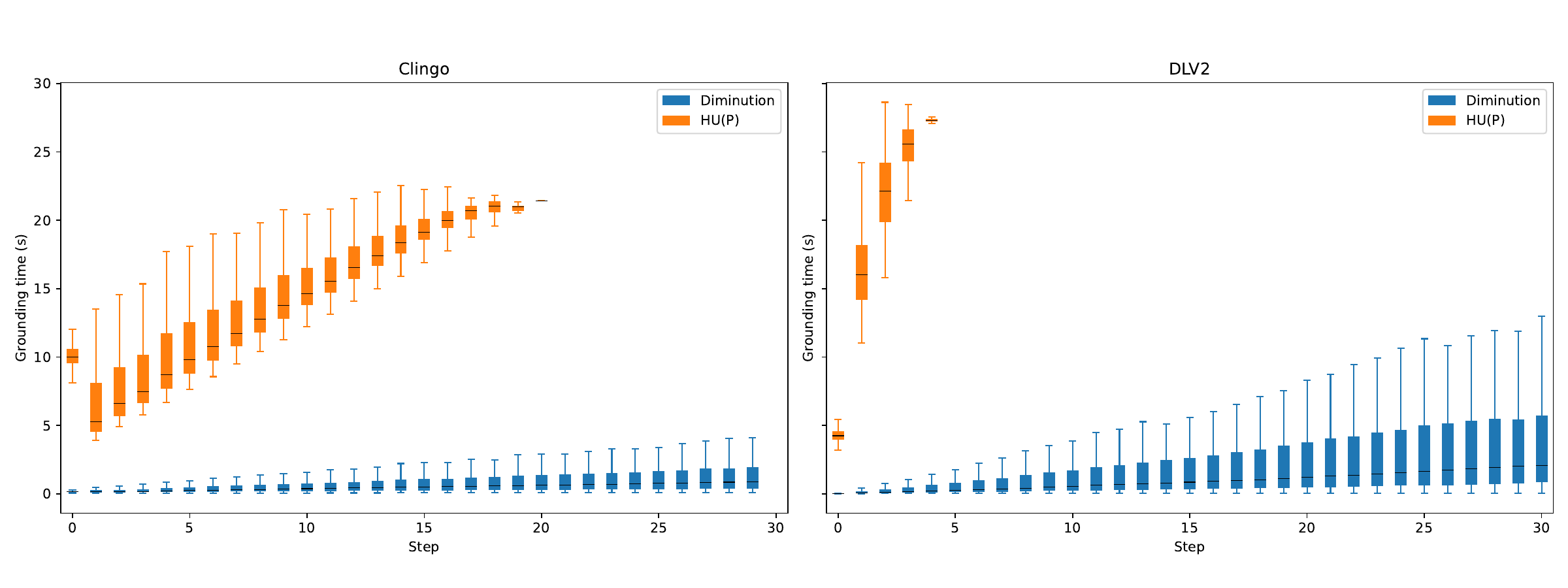}
    \caption{\textbf{VH}}
  \end{subfigure}

  % --------- 2. AWS ----------
  \begin{subfigure}[b]{\linewidth}
    \centering
    \includegraphics[height=5.5cm]{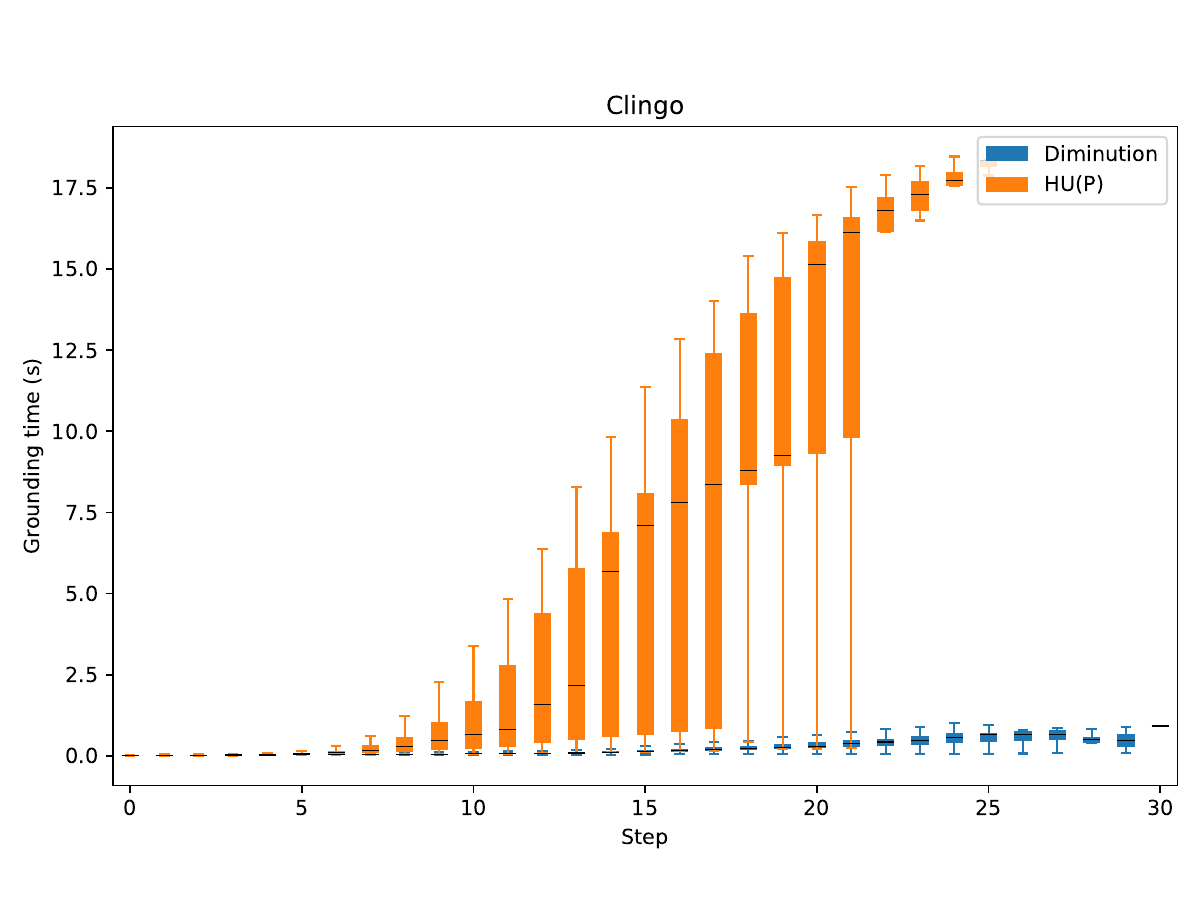}
    \caption{\textbf{AWS}}
  \end{subfigure}

  % --------- 3. GridWorld ----------
  \begin{subfigure}[b]{\linewidth}
    \centering
    \includegraphics[height=5.5cm]{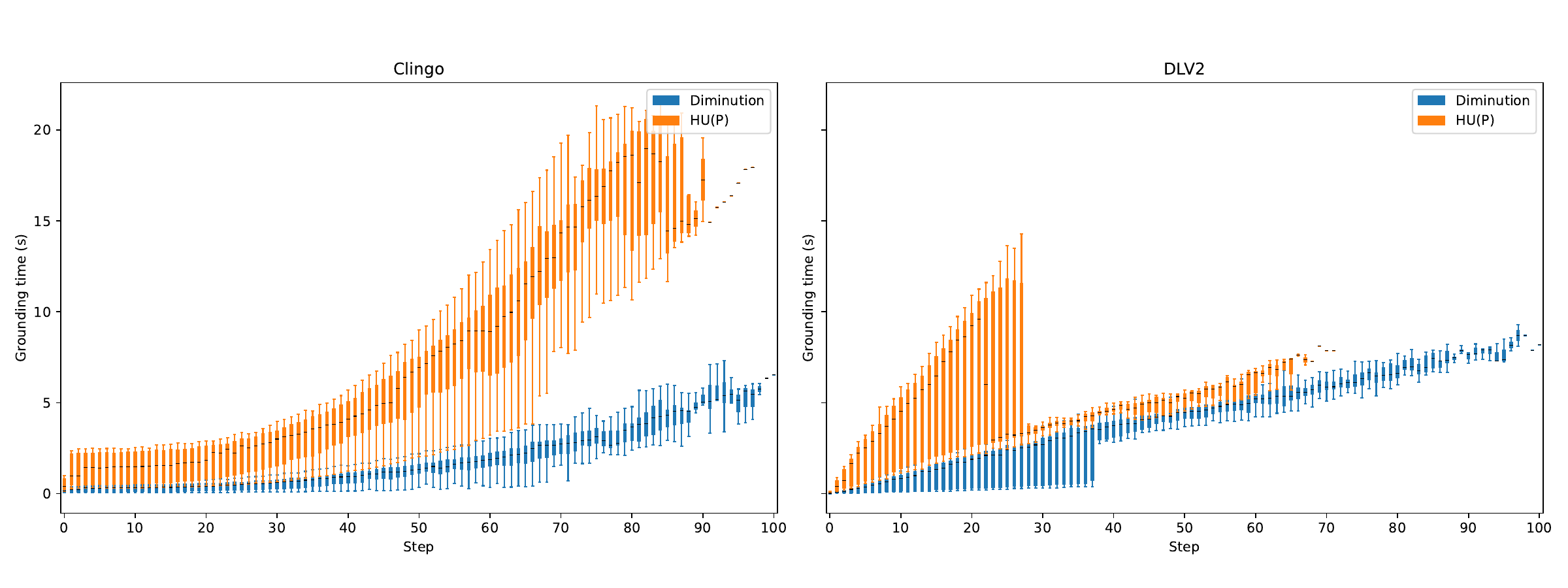}
    \caption{\textbf{GW}}
  \end{subfigure}

  \caption{Step-wise \texttt{aspif} ground-file size produced by \texttt{incmode} solving for the three planning domains.}
  \label{fig:incmode_ground_time}
\end{figure*}

\begin{figure*}
  \centering

  % --------- 1. VirtualHome ----------
  \begin{subfigure}[b]{\linewidth}
    \centering
    \includegraphics[height=5.5cm]{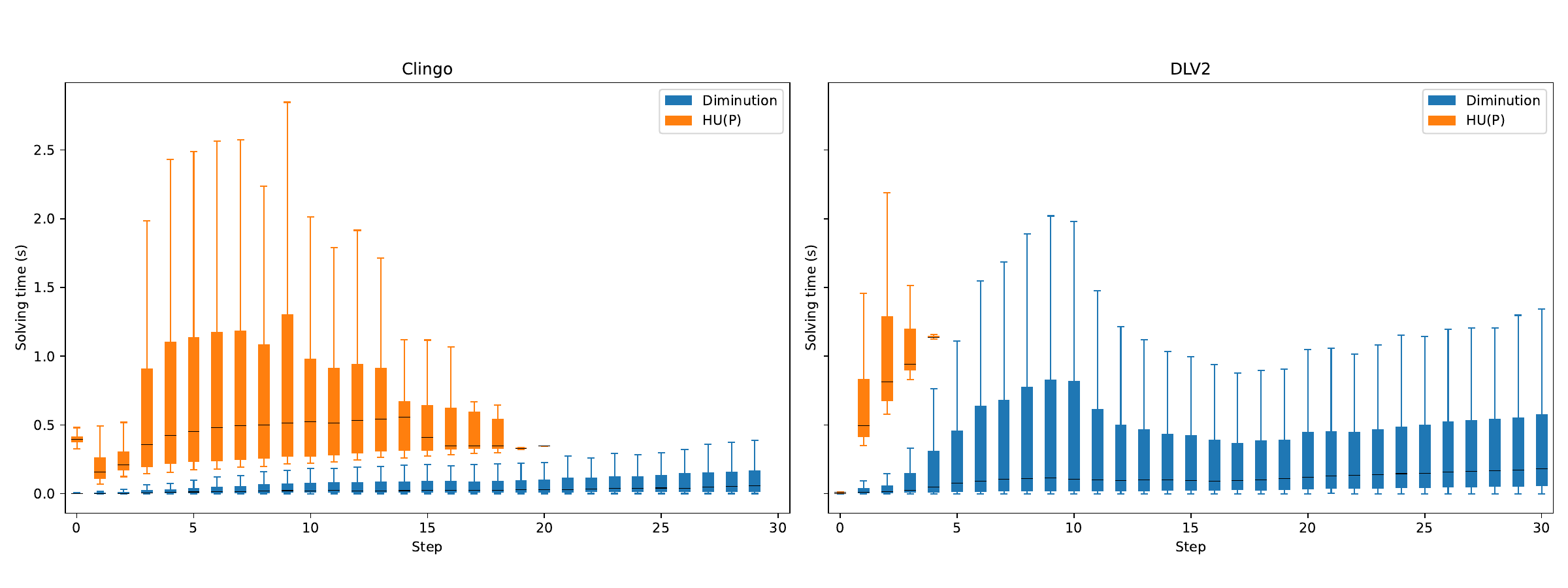}
    \caption{\textbf{VH}}
  \end{subfigure}

  % --------- 2. AWS ----------
  \begin{subfigure}[b]{\linewidth}
    \centering
    \includegraphics[height=5.5cm]{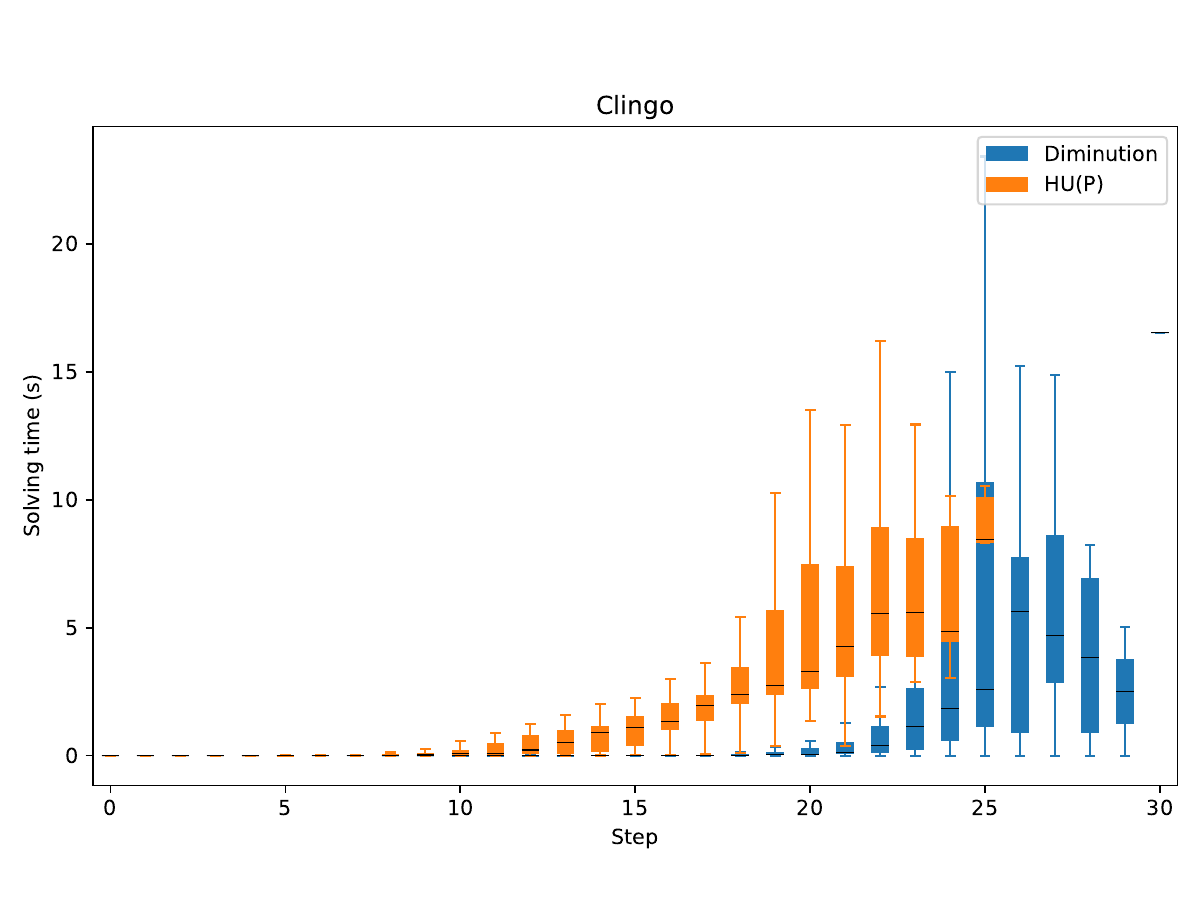}
    \caption{\textbf{AWS}}
  \end{subfigure}

  % --------- 3. GridWorld ----------
  \begin{subfigure}[b]{\linewidth}
    \centering
    \includegraphics[height=5.5cm]{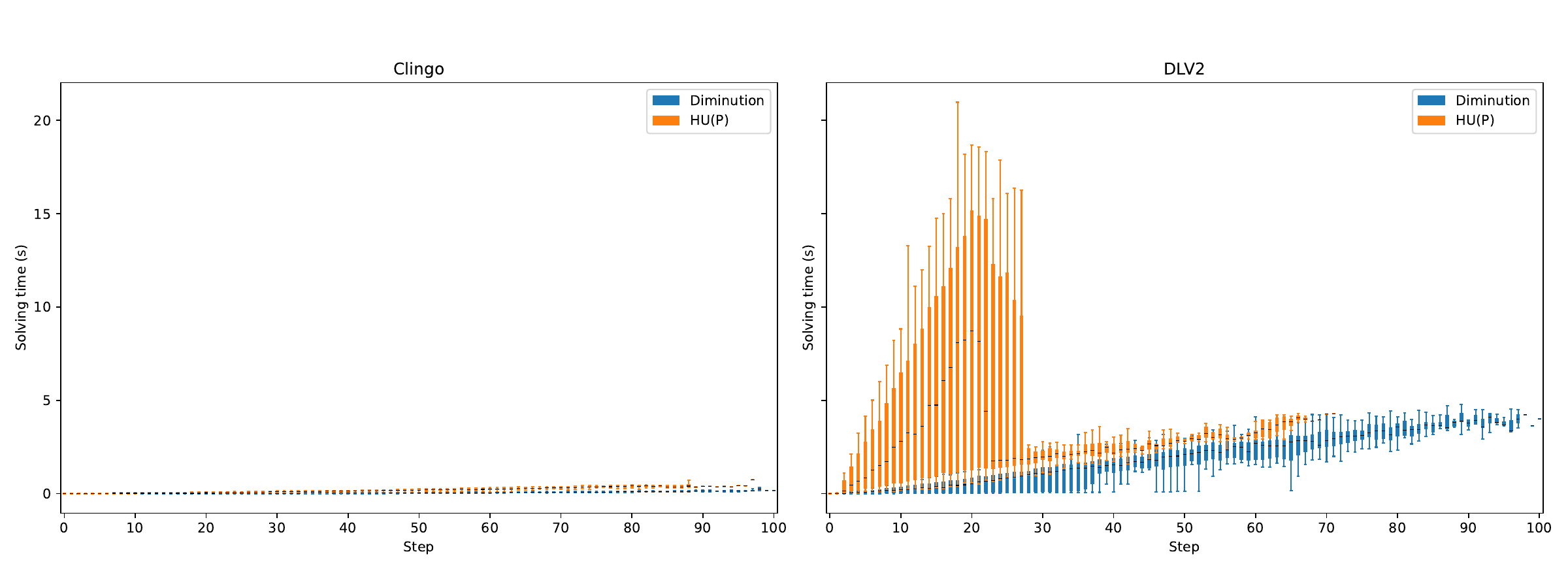}
    \caption{\textbf{GW}}
  \end{subfigure}

  \caption{Step-wise \texttt{aspif} ground-file size produced by \texttt{incmode} solving for the three planning domains.}
  \label{fig:incmode_solve_time}
\end{figure*}

We begin by tracking how problem size changes with step number in each \texttt{incmode} domain (\textbf{VH}, \textbf{AWS}, \textbf{GW}). Size is measured by the number of ground rules at each step shown in figure~\ref{fig:incmode_num_rules} and the byte size of the resulting \texttt{aspif} file figure~\ref{fig:incmode_file_size_rules}. Both metrics are shown as step-wise boxplots, each box capturing the distribution of instances still running at that step.

Figure \ref{fig:incmode_timeout_rates_single} plots the step-wise timeout rate. A run is counted as a timeout when its total wall-time passes 30 s. Small overheads—such as \texttt{DLV} parsing—can nudge the wall-time just over 30 s even if solving completes a bit sooner, but the difference is negligible.

Figures~\ref{fig:incmode_ground_time} and~\ref{fig:incmode_solve_time} present step-wise boxplots of \emph{grounding time} and \emph{solving time},
respectively.

Across all three domains, grounding with \emph{diminution} systematically yields
smaller ground programs, lowers both grounding and solving times, and thus
pushes the timeout rate well below that of the full-universe baseline $HU(P)$.
These advantages persist step-by-step—even in \textbf{GW}, whose longer
episodes ($0$–$100$ steps) amplify absolute costs—because every diminished
instance is evaluated on the same horizon as its baseline counterpart.
A few irregularities do appear: the \emph{rule-count} metric is available only
for \texttt{clingo}, and the \textbf{AWS} domain is likewise limited to
\texttt{clingo}, so those plots omit \texttt{DLV2}; moreover, at later steps
the sample size contracts as more runs finish or time out, occasionally
inflating variance or causing a slight dip in scale metrics, especially where
the remaining solvable instances are inherently simpler.
Taken together, however, the results leave little doubt that diminution is the
more efficient grounding strategy, delivering leaner encodings and faster
overall solving without compromising completeness.

\end{document}